\numberwithin{equation}{section}
\newcommand{\norm}[1]{\left\Vert#1\right\Vert} 
\renewcommand{\epsilon}{\varepsilon}
\newcommand{\Xcal}{\mathcal{X}}
\newcommand{\R}{\mathbb{R}}
\newcommand{\db}{d_B}
\newcommand{\ie}{\emph{i.e.}}
\newcommand{\eg}{\emph{e.g.}}
\theoremstyle{thmstyleone}%
\newtheorem{theorem}{Theorem}
\newtheorem{proposition}[theorem]{Proposition}%
\theoremstyle{thmstyletwo}%
\newtheorem{remark}{Remark}%
\theoremstyle{thmstylethree}%
\newtheorem{definition}{Definition}%
\begin{document}

\title[Spatial Graph Coarsening]{Topological Spatial Graph Coarsening}


\author[1]{\fnm{Anna} \sur{Calissano}}\email{a.calissano@ucl.ac.uk}

\author*[2]{\fnm{Etienne} \sur{Lasalle}}\email{etienne.lasalle@ls2n.fr}

\affil[1]{\orgdiv{Department of Statistical Science}, \orgname{University College London}, \orgaddress{\street{1-19 Torrington Place}, \city{London}, \postcode{WC1E 7HB}, \country{United Kingdom}}}

\affil*[2]{\orgdiv{Nantes Université, École Centrale Nantes, CNRS}, \orgname{LS2N, UMR 6004}, \orgaddress{\city{Nantes}, \postcode{F-44000}, \country{France}}}


\abstract{
Spatial graphs are particular graphs for which the nodes are localized in space (\eg, public transport network, molecules, branching biological structures).
In this work, we consider the problem of spatial graph reduction, that aims to find a smaller spatial graph (\ie, with less nodes) with the same overall structure as the initial one. In this context, performing the graph reduction while preserving the main topological features of the initial graph is particularly relevant, due to the additional spatial information.
Thus, we propose a topological spatial graph coarsening approach based on a new framework that finds a trade-off between the graph reduction and the preservation of the topological characteristics. In order to capture the topological information required to calibrate the reduction level, we adapt the construction of classic topological descriptors made for point clouds (the so-called persistent diagrams) to spatial graphs. This construction relies on the introduction of a new filtration called triangle-aware graph filtration.  
Our coarsening approach is parameter-free and we prove that it is equivariant under rotations, translations and scaling of the initial spatial graph. We test the method on synthetic and real spatial graphs (road network and fungi network), and show that it significantly reduces the graph sizes while preserving the relevant topological information. 
}

\keywords{Spatial graphs, Graph Coarsening, Topological Data Analysis}



\maketitle

\section{Introduction}\label{sec1}

Spatial graphs are a specific type of graphs with spatial attributes associated to the nodes and edges. They are studied in different applicative context and called with various names. In transport analysis, social network analysis, and epidemiology, they are often referred to as spatial networks \citep{barthelemy2011spatial,gou2021understanding,doytsher2010querying}. Spatial graphs are also closely linked to geometric graphs — which are graphs whose nodes lie in Euclidean space and whose edges are (possibly intersecting) straight lines \citep{cerny2005geometric}. When edges may instead be continuous curves, the structure is known as a topological or metric graph \citep{pach1997graphs}. Geometric, topological, and metric graphs are extensively studied within graph theory, which explores the possible configurations of a graph with a given number of nodes and constraints on edges; see \cite{pach2013beginnings,pach2004geometric} for historical and technical perspectives. Finally, spatial graphs can also be viewed as simplified forms of shape or elastic graphs, where actual shapes are encoded along the edges \citep{guo2020representations}. When working with such graphs, a common problem is dimensionality as they may have a high number of nodes and edges. Airways in the lungs or neurons networks are common real world examples displaying high dimensionality. The problem of decreasing the graph complexity has been studied in the network analysis literature under the name of graph sparsification or graph coarsenin, see \cite{hashemi2024comprehensive} for a recent overview of the field. Graph sparsification consists in selecting only a subset of nodes and edges \citep{althofer1993sparse}, while graph coarsening consists in grouping the nodes into super-nodes and edges into super-edges using aggregation algorithms \citep{loukas2018spectrally}.\\

In this paper, we propose a way to simplify a spatial graph by preserving its topology. We want to preserve the principal graph structure (connected components, large cycles) while removing small and noisy graph components. The principal adjective recalls the principal component analysis as our method is similarly trying to capture the characteristics of the graph which retain the majority of the information. We propose a spatial graph coarsening procedure which is topologically informed - here called Topological Spatial Graph Coarsening.  
To the best of our knowledge, the proposed method is new in the literature and addresses the open challenge in many real world applications of working with graphs with lower number of nodes. To define a dimensionality reduction procedure which is topologically informed, we need to estimate the topological properties of the spatial graphs. We leverage tools from Topological Data Analysis (TDA) \citep{chazal2021introduction}. Taking its root in the field of algebraic topology, TDA is a recent and promising domain that provides well-founded tools to analyze complex data - such as graphs -  for which topological properties represent key insights into understanding the data. Here, by topology we refer to the properties left unchanged by continuous deformations such as rotation, translation, stretching, and bending (\eg, number of connected components or holes). A common tool in TDA is the persistent diagram, which is classically used to study the topology of point clouds in $\mathbb{R}^p$ - see seminal works by \cite{edelsbrunner2002topological, carlsson2009topology, aktas2019persistence}. As for point clouds, some attempt have been done to adapt classic filtration to graph data \citep{archambault2007topolayout,li2012effective,ferrara2011topological}. For example, a graph filtration can be defined based on the node degree \citep{hofer2017deep}, the Jaccard index and Ricci curvature \citep{zhao2019learning}, or the Heat Kernel \citep{carriere2020perslay,lasalle2024heat}. As our graphs have a spatial component, we define a new filtration which is based on both the spatial and the structural information. We name it the triangle-aware graph filtration as it captures cycles information generated by all cycles, even the one induced by triangles in the graph. Given the topological characterization, we can now propose a coarsening procedure for spatial graphs. The new graph preserve the most important topological features of the original graph - captured by our filtration and persistent diagram -  with a lower number of nodes and edges. As for any dimensionality reduction technique, we would like to calibrate the reduction level, as too much reduction would yield to an oversimplified graph with missing important features, and too little reduction would not provide a significant and useful coarsening. We define a score to guide the choice of an appropriate reduction level. This score balances the reduced graph complexity with the amount of topological ``distortion'' between the original and reduced graph evaluated by comparing persistence diagrams. Minimizing this score allows to calibrate the reduction level.\\


The paper is organized as follow. In Section \ref{sec:graph_coarsening}, we introduce the spatial graph object and the coarsening procedure. In Section \ref{sec:PD}, we recall how to compute classic persistence diagrams and we introduce the persistence diagram adaptation for spatial graphs (Subsection \ref{sec:adaptation_for_graphs}). In Section \ref{sec:topo_graph_coarsening}, we define the topologically informed spatial graph coarsening. As the new methodology is tailored for spatial graphs, we study the invariance and equivariance properties of the method with respect to rotation, translation and scaling of the node positions; see Section \ref{sec:properties}. In Section \ref{sec:experiments}, we showcase the proposed method on simulated data and on a real world dataset of fungi networks \citep{fricker2025fungi}. The Python code of our method is publicly available in GitHub \citep{github_repo}.

\section{Spatial Graph Coarsening}
\label{sec:graph_coarsening}

 A spatial graph $G$ is given by a set of nodes $V=\{v_1, \dots, v_n\}$, a set of edges $E\in V\times V$ and spatial positions of the nodes $(X_u)_{u \in V}$, where $x_u \in \R^p$ for all nodes $u$, with $p$ being the space dimension. The spatial graph is denoted by the triplet $G=(V,E,X)$. In this work, the nodes are embedded in the space $\mathbb{R}^p$ equipped with standard Euclidean norm $\|\cdot\|$. In classic applications, the dimension $p$ is usually 2 or 3. The length function $\ell:E\rightarrow\mathbb{R}_+$ assigns to each edge $(u,v) \in E$ its edge length $\ell_{u,v} = d(x_u,x_v)$, computed via Euclidean distance. \\

The objective of this paper is to define a coarsening procedure for spatial graphs, retaining the main topological properties of the original graph. In a general graph setting, given a set of graphs $\mathcal{G}$, a coarsening method can be defined via a coarsening function $f:\mathcal{G}\rightarrow\mathcal{G}$ which defines a new graph on a partition of the nodes of the original graph: $f(G)=(f^V(G),f^E(G))$ where $f^V(G)\in \mathcal{P}(V)$, with $\mathcal{P}(V)$ denoted the set of partitions of $V$. As our graphs are spatial, we define a spatial graph coarsening methods based on a spatial parameter $\theta \geq 0$. Such parametrization of the coarsening procedure allows to define a coarsened graph which depends on both the structural and the spatial structure of the graph. Essentially, we obtain the coarsened spatial graph by collapsing the edges with length smaller than $\theta$. 
Before giving the formal definition of our spatial graph coarsening (Definition~\ref{def:spatial_graph_coarsening}), we introduce the notation $sub(G, \theta) = (V, E_\theta)$, such that $E_\theta = \{ (u,v) \in E, \ s.t. \ \ell_{u,v} \leq \theta \}$. $sub(G, \theta)$ denotes the abstract subgraph of $G$, induced by the edges of length smaller than $\theta$.

\begin{definition}
    Consider a spatial graph $G=(V,E,X)$ and a parameter $\theta \geq 0$. We define the coarsened graph at scale $\theta$ by $f_\theta(G) = (f_\theta^V(G), f_\theta^E(G), f_\theta^X(G))$ where the three components $f_\theta^V(G)$, $f_\theta^E(G)$ and $f_\theta^X(G))$, acting respectively on the nodes, edges and positions, are defined as follows:
    \begin{itemize}
        \item the set of nodes of the coarsened graph $f_\theta^V(G) \in \mathcal{P}(V)$ is defined as the partition of $V$ that corresponds to the connected components of the subgraph $sub(G, \theta)$. Each element of this partition is called an hypernode, as it is a subset of nodes in $G$.
        \item the set of edges $f_\theta^E(G) \subset f_\theta^V(G) \times f_\theta^V(G)$ is obtained by including the edge $(V_i,V_j)$ in $f_\theta^E(G)$ if and only if there exist $u\in V_i$ and $v\in V_j$ such that $(u,v)\in E$.
        \item we proposed two strategies to assign the new node positions, the average positioning and the degree positioning:
        \begin{itemize}
            \item the average positioning assigns to an hypernode $V_i$ the average position of the merged nodes:
$$f_\theta^X(G)_i=\frac{1}{|V_i|}\sum_{u\in V_i}x_u.$$ 
            \item  the degree positioning assigns to an hypernode $V_i$ the position of the node with the highest degree among the merged nodes:
$$f_\theta^X(G)_i=x_u, \quad u=argmax_{v\in V_i}deg(v)$$
where the degree $deg(v)\in \mathbb{R}_+$ is the number of edges incident to $v$. 
        \end{itemize}
    \end{itemize}
    \label{def:spatial_graph_coarsening}
\end{definition}

Given the new set of nodes, the new set of edges and the new coordinates, the coarsened graph $f_\theta(G)$ can be equipped with the length function $\ell_{f_\theta(G)}$ from $f_\theta^X(G)$ using the Euclidean distance between hypernodes. The choice of which position strategy to adopt (average or degree positioning) depends on the application at hand. Degree positioning is often preferable for physical networks. For example, a road network - where the nodes represents the crossing and the edges represents the roads - can be simplified by excluding minor crossing but the position of the remaining crossing should be preserved. The average positioning might be a better option when the graph represents a skeleton of an object. The coarsening procedure is applied to simplify the network yet preserving the object topology.

\begin{remark}
    The parameter $\theta$ takes values in $\mathbb{R}_+$. However, in practice, the interesting set of values for the parameter is $\Theta=[\min_{(u,v)\in E} \ell_{u,v}, \max_{(u,v)\in E} \ell_{u,v} ]$. The coarsened graph for $\theta> \max_{(u,v)\in E} \ell_{u,v} $ is the trivial graph where all the nodes are collapsed into one single node. To simplify notations, we continue to consider $\theta$ in $\mathbb{R}_+$
\end{remark}

Once we have introduced a spatial coarsening procedure, the question is now how to select the parameter $\theta$. The objective of the paper is to define a coarsened graphs with lower number of nodes while preserving the key topological features of the original graph. An illustrative example is shown in Figure \ref{fig:pruned_graphs}. We sample $20$ nodes at random with position on an annulus shape. The corresponding fully connected graph has $(n^2-n)/2$ edges, but we select only a percentage of the edges, keeping only those with smallest length. The original graph is reduced into three graphs with lower number of nodes, obtained with different values for the parameter $\theta$. From the picture it is clear to see how the coarsening procedure is first merging the nodes while preserving its key topological feature which is largest the cycle. As the parameter $\theta$ increases, more nodes are merged together, resulting in an oversimplified graph. To identify a parameter $\theta$ that performs a good balance between reducing the graph and preserving the main topological features, we define a procedure that relies on persistence diagrams to capture the topological information.
\begin{figure}[H]
    \centering
    \includegraphics[width=0.9\linewidth]{}
    \caption{Examples of pruned graphs for different values of $\theta$. The leftmost graph is the initial graphs.}
    \label{fig:pruned_graphs}
\end{figure}

\section{Persistence Diagrams for Spatial Graphs} \label{sec:PD}

classic TDA pipelines consist in capturing certain properties for shapes underlying the data such as number of holes or number of connected components. One of the main tools is a topological descriptor called persistence diagram. In this section, we briefly introduce the general TDA pipeline for analyzing point clouds, before describing how they can be adapted to graph, and especially spatial graphs.  For a more complete presentation of TDA, we refer the reader to \cite{edelsbrunner2010computational},\cite{oudot2015persistence}, \cite{chazal2021introduction}, and \cite{iniesta2022topological}.

\subsection{classic persistence diagrams}

Data as point clouds have no intrinsic interesting topology (only isolated contractible connected components). However, there often exist underlying continuous structures with richer topology, \eg, points sampled on some manifolds. To capture the topological invariants of the underlying shape, we construct continuous spaces on top of the point clouds called simplicial complexes. A simplicial complex is a space made by gluing together simple structures of different dimensions: points, segments, triangles, tetrahedrons etc. The simple building blocs are called simplices. They are represented in an abstract way as a set of their vertices, \eg, an segment between point $u$ and point $v$ is encoded as $\{u,v\}$, a triangle between $u$, $v$ and $w$ is given by $\{u,v,w\}$. Simplicial complexes are a generalization of classic graphs, which are one-dimensional objects (composed of points and edges only), where one can include higher-dimensional objects like triangles (dimension 2), tetrahedron (dimension 3) and equivalents in higher dimensions. Formally, they are a set of simplices, \ie, a set of sets of vertices.  Given a point clouds $\Xcal = \{x_1, \dots x_n\}$ in $\R^p$ equipped with the Euclidean distance $d$, a classic simplicial complex built on top of $\Xcal$ is the so-called Vietoris-Rips (VR) complex at scale $r > 0$ denoted by $VR_r(\Xcal, d)$ \citep{zomorodian2010fast}. 
To determine which simplices are included in $VR_r(\Xcal)$ consider all balls of radius $r$ centered on the points of $\Xcal$. If two balls intersect, the corresponding edge between the two centers is an element of $VR_r(\Xcal, d)$. More generally, for higher dimension, any $k$-dimensional simplex (subset of $k+1$ points of $\Xcal$) is included if all pairs of balls centered in points of the simplex intersect. Formally, the $k$-simplex $\{ x_{i_1}, \dots, x_{i_k} \}$ is in $ VR_r(\Xcal, d)$, if for all $j$ and $j'$ such that $1 \leq j < j' \leq k$, it satisfies $d(x_{i_j}, x_{i_{j'}}) \leq r$.\\

From this continuous structures one can now compute topological invariants of the data \citep{chazal2021introduction}. 
To extract meaningful topological information from the data, instead of looking at the topological invariants of one simplicial complex at a given scale $r > 0$, we can look at their evolution in the family of simplicial complexes $(VR_r(\Xcal, d))_{r > 0}$ as $r$ increases.  Such a family is called a filtration, \ie, a family of nested simplicial complexes. 
As $r$ increases, some topological features (connected components, holes, cavities, etc.) appear and disappear. For each topological feature, we record the scale $r_b$ at which it appears (it is born) and the scale $r_d \geq r_b $ at which it disappears (it dies). We encode this information into a point $(r_b, r_d)$ in $\R^2$. The collection of such points for all topological features is called the persistence diagram (PD). 
Points that are closer to the diagonal $y=x$ in the PD corresponds to topological features that lived during a short range of scales, they did not persist, hence they are usually associated to topological noise. Conversely, points that lie far from the diagonal correspond to topological features that persisted in the filtration and therefore correspond to important topological characteristics of the data. 
Note that monitoring the topological invariants in the filtration avoids having to choose a specific scale at which relevant topological features would be present. How much a topological feature persists in the filtration encode its relevance as a topological information of the data.\\

The space of persistence diagrams can be equipped with the so-called Bottleneck distance \citep{cohen2005stability} denoted by $\db$ and defines as follows. Let $\mu$ and $\nu$ be two PDs (\ie, sets of points in $\R^2$), $\Delta := \{(a,a), a \in \R\}$ be the diagonal and $\Pi(\mu, \nu)$ be the set of bijections from $\mu \cup \Delta$ to $\nu \cup \Delta$. Then, $\db$ is defined as 
\begin{equation}
    \db(\mu, \nu) = \underset{\pi \in \Pi(\mu, \nu)}{\inf} \underset{x \in \pi(x)}{\sup} \|x - \pi(x)\|_\infty \,.
    \label{eq:bottleneck_distance}
\end{equation}
Considering the bijections between the PDs augmented by the diagonal allows to compare PDs with different numbers of points, as extra points can be matched with points on the diagonal.

\subsection{Persistence Diagram for Spatial Graphs \label{sec:adaptation_for_graphs}}

We want to adapt the above construction of persistence diagrams of point clouds data to spatial graphs data. To do so, we follow the same general idea: build a nested family of simplicial complexes on top of graphs and monitor how the topological components evolve through this filtration. Previous approaches for computing filtrations over graphs considered a family of nested subgraphs of $G$, $(G_r)_{r\geq 0}$ (by filtering the edges based on weights or other graph characteristics like degree) and compute simplicial complexes over each subgraph $C(G_r)$ where $C$ can be for example the clique complex (also called flag complex) or the neighborhood complex \citep{lovasz1978kneser, aharoni2005eigenvalues, horak2009persistent}. These approaches differ from the point cloud pipeline on different aspects. First, the scale is given by a substructure of the initial object (here the graph) while for point clouds we consider the whole dataset and compute simplicial complexes at different scales. Secondly, some simplices are never included in the filtration (\eg, a 1-simplex that correspond to an edge that is not in $G$ will never be include in the clique complex).\\

We propose a new way of computing a filtration of simplicial complexes over a graph that is closer to the construction for point clouds. The essence of our approach lies in the fact that the VR filtration is entirely determined by the distances between all pairs of points in the data. Thus, we propose to replace the Euclidean distance used in point clouds by the shortest path distance over the graph, denoted by $d_G$ and defined for all pairs of vertices $u$ and $v$ by 
\begin{equation}
    d_G(u,v) = \min_{k\geq 2} \min_{\substack{u_1, \dots, u_k \in V, \\ \text{s.t. } u_1 = u, \ u_k = v, \\ \forall 1 \leq i \leq k-1, \ (u_i, u_{i+1}) \in E}} \sum\limits_{i=1}^{k-1} \ell_{u_i, u_{i+1}},
    \label{eq:shortest_path_dist}
\end{equation}
where we recall that the length $\ell_{u,v}$ of an edge $(u,v)$ is given by the Euclidean distance $d(x_u, x_v)$. Therefore, we define a simplicial complex at scale $r>0$ over the graph $G$ as $\mathcal{C}_r(G) = VR_r(V, d_G)$. 

\begin{remark}
    If the graph is the complete graph, the shortest path distance coincides with the Euclidean distance and we recover the classic $VR_r$ filtration.
\end{remark}

The proposed filtration $\mathcal{C}(G) = (\mathcal{C}_r(G))_{r \geq 0}$ has the drawback that cycles formed by a triplet of edges (empty triangle) do not persist in the filtration. In fact, the 2-simplex (filled triangle) appears in the filtration exactly when the longest edge is included, making the cycle disappear at the same scale at which it appears. Thus, this type of cycles are not represented in the induced persistence diagram. As we would like to represent in the persistence diagram any cycles - no matter the number of edges - we propose to modify the construction of the simplicial complexes $\mathcal{C}_r(G)$ by enlarging the scale at which 2-simplices appear.  This novel filtration is defined below and illustrated in Figure \ref{fig:triangle_aware_filtration} in comparison with the classic Vietoris-Rips filtration.

\begin{definition}[Triangle-aware graph filtration]
    Given a spatial graph $G$, we define the new triangle-aware graph filtration $\widetilde{\mathcal{C}}(G) = (\widetilde{\mathcal{C}}_r(G))_{r \geq 0}$. For each scale $r \geq 0$, the simplicial complex $\widetilde{\mathcal{C}}_r(G)$ is the same as  $\mathcal{C}_r(G)$, except that 2-simplices are now include for larger values of $r$. A given 2-simplex $\{u,v,w\}$ appears in $\widetilde{\mathcal{C}}(G)$ at scales larger or equal to $\min \left\{ d_G(u,v)+d_G(v,w) , \ d_G(u,w)+d_G(v,w) , \ d_G(u,v)+d_G(u,w) \right\}$. Finally, to turn the filtration $\widetilde{\mathcal{C}}(G)$ into a valid filtration in any dimension, we modify the scale at which simplices of larger dimension appear, making sure that they appear after their faces (sub-simplices). To do so, the scale at which a $k$-simplex for $k \geq 3$ appears is set to the maximum of the scales at which its faces appear.  
\end{definition}

\begin{figure}
    \centering
    \includegraphics[width=\linewidth]{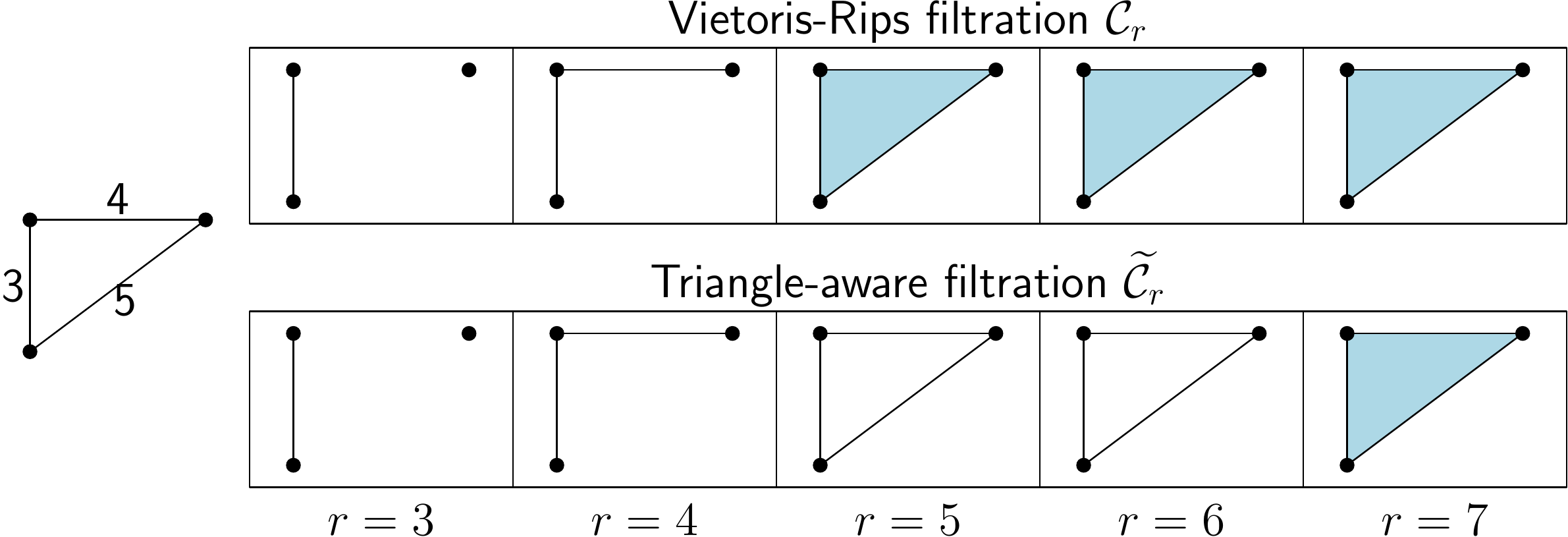}
    \caption{Illustration of the difference between the classic Vietoris-Rips filtration (top) and the proposed triangle-aware filtration (bottom), where the 2-simplex appears at scale $r=5$ and $r = \min\{3+4, \ 4+5, \ 3+5\}=7$, respectively.}
    \label{fig:triangle_aware_filtration}
\end{figure}

Note that in  $\mathcal{C}(G)$ the 2-simplex $\{u,v,w\}$ appears at scales larger or equal to $\max \left\{ d_G(u,v), d_G(v,w), d_G(u,w) \right\}$, so the triangle-aware filtration $\widetilde{\mathcal{C}}(G)$ actually delays the appearing of the 2-simplices. Note also that even in dimension 2, the filtration is a valid filtration as, from the triangular inequality, the triangle appears after its edges. Remark also that with this filtration, flat triangles yields cycles with small persistence while equilateral triangles maximize the persistence.

In Figure \ref{fig:original_G_and_PD}, we show a persistent diagram of a spatial graph $G$. Red points correspond to topological features of dimension 0 (\ie, connected components) while the blue points correspond to topological components of dimension 1 (\ie, cycles).

\begin{figure}[htp]
\centering
    \includegraphics[width=0.53\linewidth]{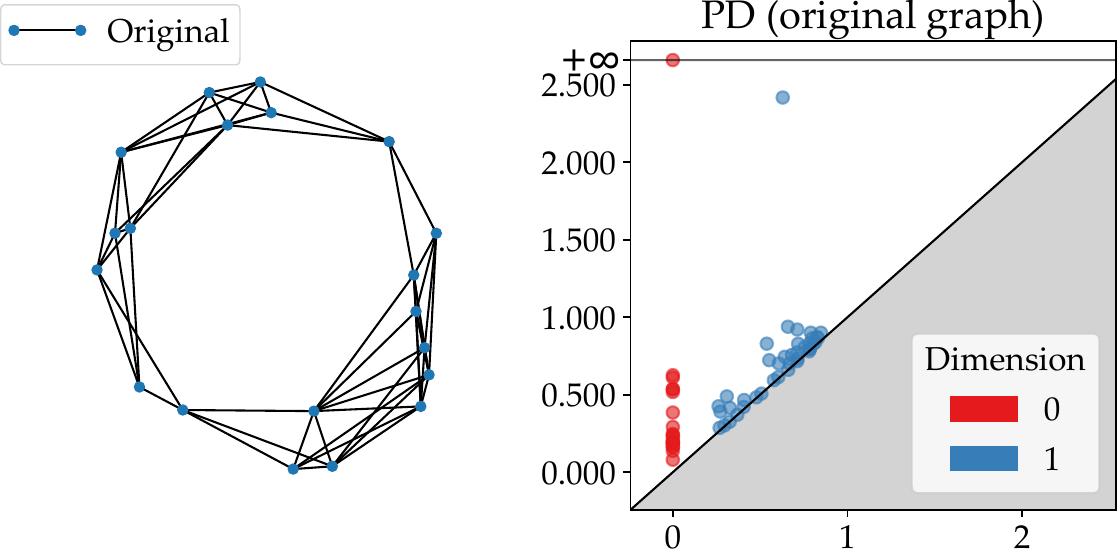}
    \caption{A graph (left) and the corresponding persistence diagram (right). }
    \label{fig:original_G_and_PD}
\end{figure}

Using the bottleneck distance, these topological descriptors can now be used to compare the topology of a given graph and and the one of its coarsening version, to quantify the topological distortion caused by the coarsening procedure. We exploit these tools in the next section.

\section{Topological Spatial Graph Coarsening \label{sec:topo_graph_coarsening}}

Given a sequence of coarsened graphs $f_\theta(G)$ obtained by exploring the set $\Theta$, we define a scoring function to select the optimal coarsened graph. In graph coarsening literature, different scoring functions have been defined: from reconstruction methods where the score represents the capability of reconstructing the original graph (see for example spatial coarsening \cite{lefevre2010grass,riondato2017graph} and spectral coarsening \cite{kumar2023featured}); to reconstruction-free method where the scoring function is measuring interesting features for the specific application (for a review see \cite{hashemi2024comprehensive}). We define a scoring function that accounts for the balance between the topological characteristics distortion and the complexity of the coarsened graph.
\begin{definition}[Topology-informed scoring function]
Given an original spatial graph $G=(V,E,X)$ and a coarsened graph $f_\theta(G)=(f^V_\theta(G), f_\theta^E(G), f_\theta ^X(G))$. Given the corresponding persistent diagrams $PD(G),PD(f_\theta(G)) \in \mathcal{PD}$, where $\mathcal{PD}$ is the space of persistent diagrams. The scoring function is:
\begin{equation}
    S_\theta(G)= \frac{|f_\theta^E(G)|}{|E|} + \lambda \cdot  \db(PD(G), PD(f_\theta(G))) \,,
    \label{eq:score}
\end{equation}
where $\db:\mathcal{PD}\times\mathcal{PD}\rightarrow\mathbb{R}$ is the bottleneck distance between persistent diagrams, defined in \eqref{eq:bottleneck_distance} and $\lambda\in \mathbb{R}$ is a scaling parameter. 
\end{definition}
The scoring function is a combination of the proportion of remaining edges in the graph and the topological difference between the original and the reduced graph, measured via the bottleneck distance between persistent diagrams. To match the order of magnitude of both terms, $\lambda$ is set to the inverse of the maximal distance between the persistence diagrams of the initial and a reduced graph giving $\lambda(G)^{-1} := \max_\theta \db \left( PD(G), PD(f_\theta(G))) \right)$. Our topological spatial graph coarsening method minimizes the score with respect to $\theta$ to find a satisfying reduction level. It outputs the coarsened graph corresponding to this optimal $\theta$. In Figure \ref{fig:original_reduced_G_and_PD}, we show a spatial graph and its corresponding reduced graph obtained with our topological spatial graph coarsening (here $\theta^*=0.4$) using average node positioning. We also display the persistent diagrams of both graphs, showing that the persistent diagram of the reduced graph displays the main topological features (points far away from the diagonal) of the original one, while most of the ``topological noise'' (points close to the diagonal) have been removed, demonstrating the effect of the graph simplification. 

\begin{figure}[H]
    \centering
    \includegraphics[width=0.9\linewidth]{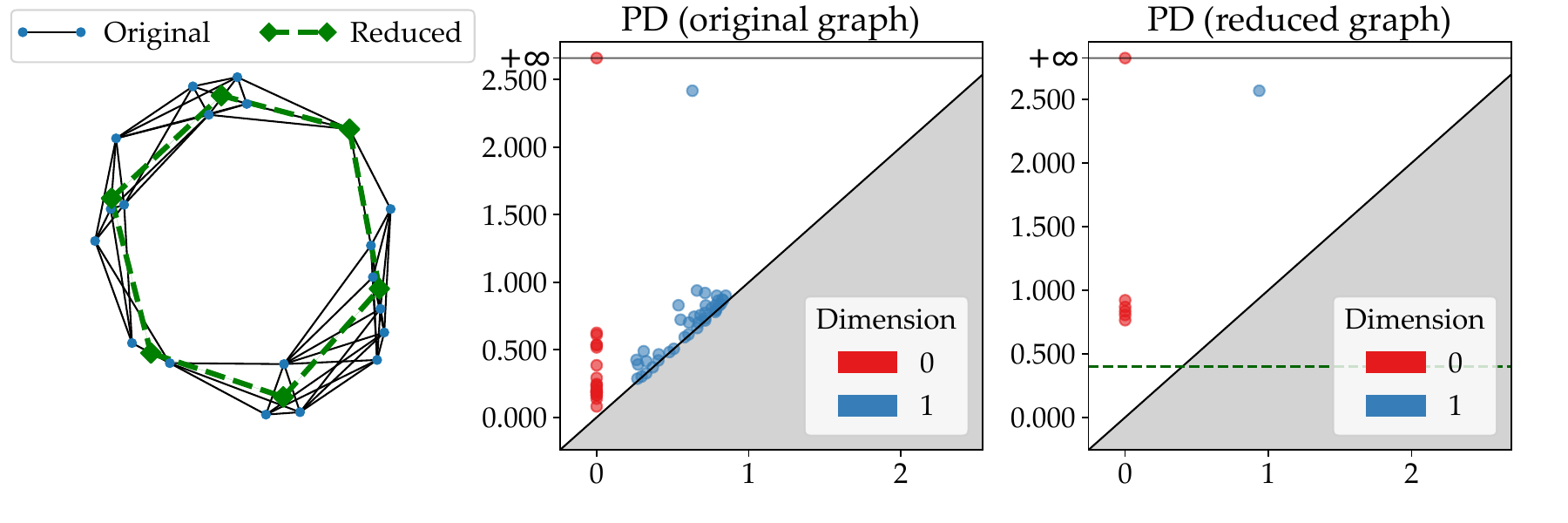}
    \caption{From left to right. The original and reduced graph. The persistence diagram of the original graph. The persistence diagram of the reduced graph; horizontal dashed line represents the threshold $\theta$ used to compute the reduced graph.}
    \label{fig:original_reduced_G_and_PD}
\end{figure}

\section{Property of Spatial Graph Coarsening}\label{sec:properties}

We often want coarsening methods to be coherent and robust to certain modifications of the initial graph. In this work, we show that our topological spatial graph coarsening is equivariant with respect to certain group actions. In particular, as we are dealing with nodes embedded in Euclidean space, the actions of interest are rotation, translation, reflection, and scaling.  Let's start by defining how these groups act on the spatial graph.
\begin{definition}
    Consider a spatial graph $G=(V, E, X)$ with coordinated in $X\in \mathbb{R}^{p\times n}$, $|V|=n$, and the similarity group $Sim(p)$. We can define a set of equivalent graphs $[G]=\{\psi(G,(R,A,k))=(V,E,kRX+A), (R,A,k)\in Sim(p)\}$ by applying the group action via $\psi: \mathcal{G}\times Sim(p) \rightarrow \mathcal{G}$, where $R\in O^p(\mathbb{R})$ is an orthogonal matrix, $A\in \mathbb{R}^p$ is a vector translating the graph, and $k>0$ is a strictly positive constant value scaling the graph.
\end{definition}
In the paper, we focus on graphs with nodes embedded in $\R^p$ with $p=2,3$ but the results hold in higher dimension. The symmetric group $Sym(p)$ is one of the common group studied in shape and image analysis. Its elements $(R,A,k)$ can rotate, scale, and translate the original object. In the case of spatial graphs, the equivalence class of the original graph $G$ contains all the graphs that share the same set of nodes $V$, the same connectivity structure $E$, and a similarity transformation of the spatial coordinated of the nodes in $\mathbb{R}^p$ (i.e. a set of rotated, scaled, and translated coordinates). The objective of this section is to study the properties of the spatial graph coarsening procedure. While the writing might seems a bit cumbersome, the idea behind are simple and represented in an illustration in Figure \ref{fig:properties_illustration}. We want to show how the procedure is retrieving the same graph yet rotated, translated and scaled when an element of the symmetric group is applied to the original spatial graph. To do so, we proceed step by step. Firstly, we look at the spatial graph coarsening and we prove the parameter equivariance property (Section \ref{sec:par_equiv}). Secondly, we look at the topological spatial graph coarsening. To prove its invariance under the group action, we need to look at how the action effect the scoring function (Section \ref{sec:action_scoring}). Finally, we prove the main result of this section, Proposition~\ref{prop:equivariance_final}, that shows the equivariance of our topological spatial coarsening procedure under the symmetric group action (Section \ref{sec:optimal_coarsening_prop}).

\begin{figure}
    \centering
    \includegraphics[width=0.29\linewidth]{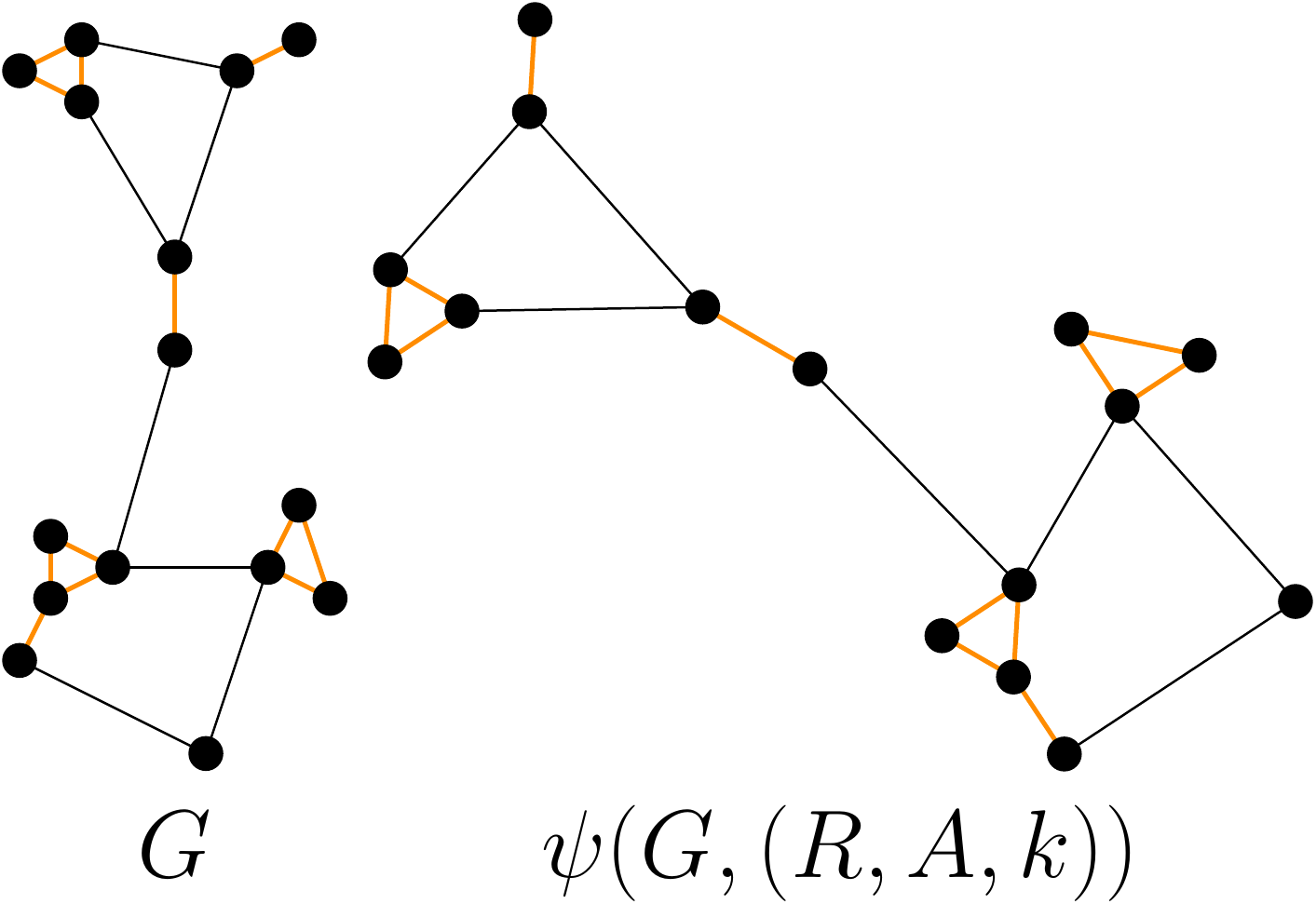}
    \hspace{0.03\linewidth}
    \includegraphics[width=0.29\linewidth]{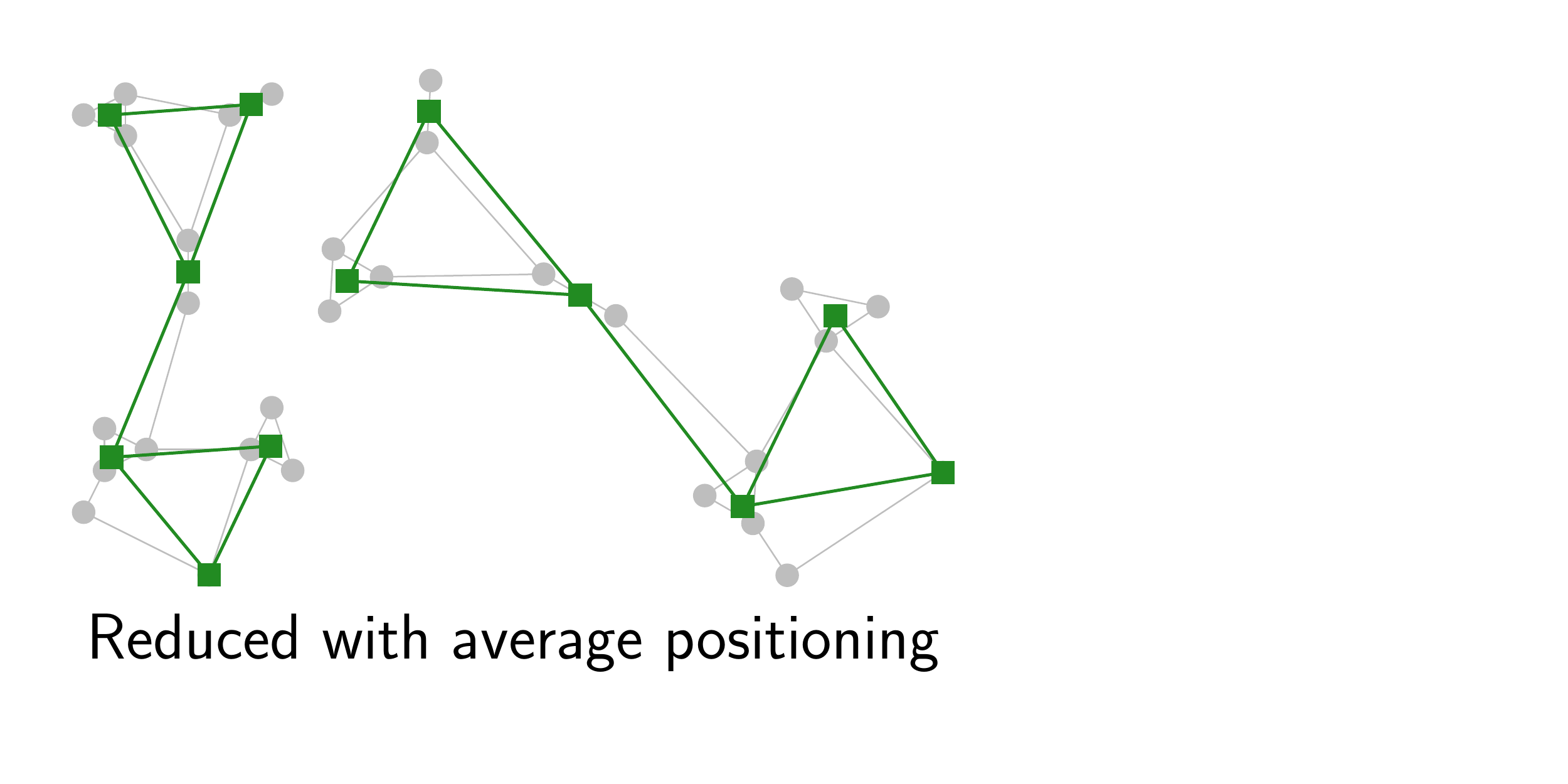}
    \hspace{0.03\linewidth}
    \includegraphics[width=0.29\linewidth]{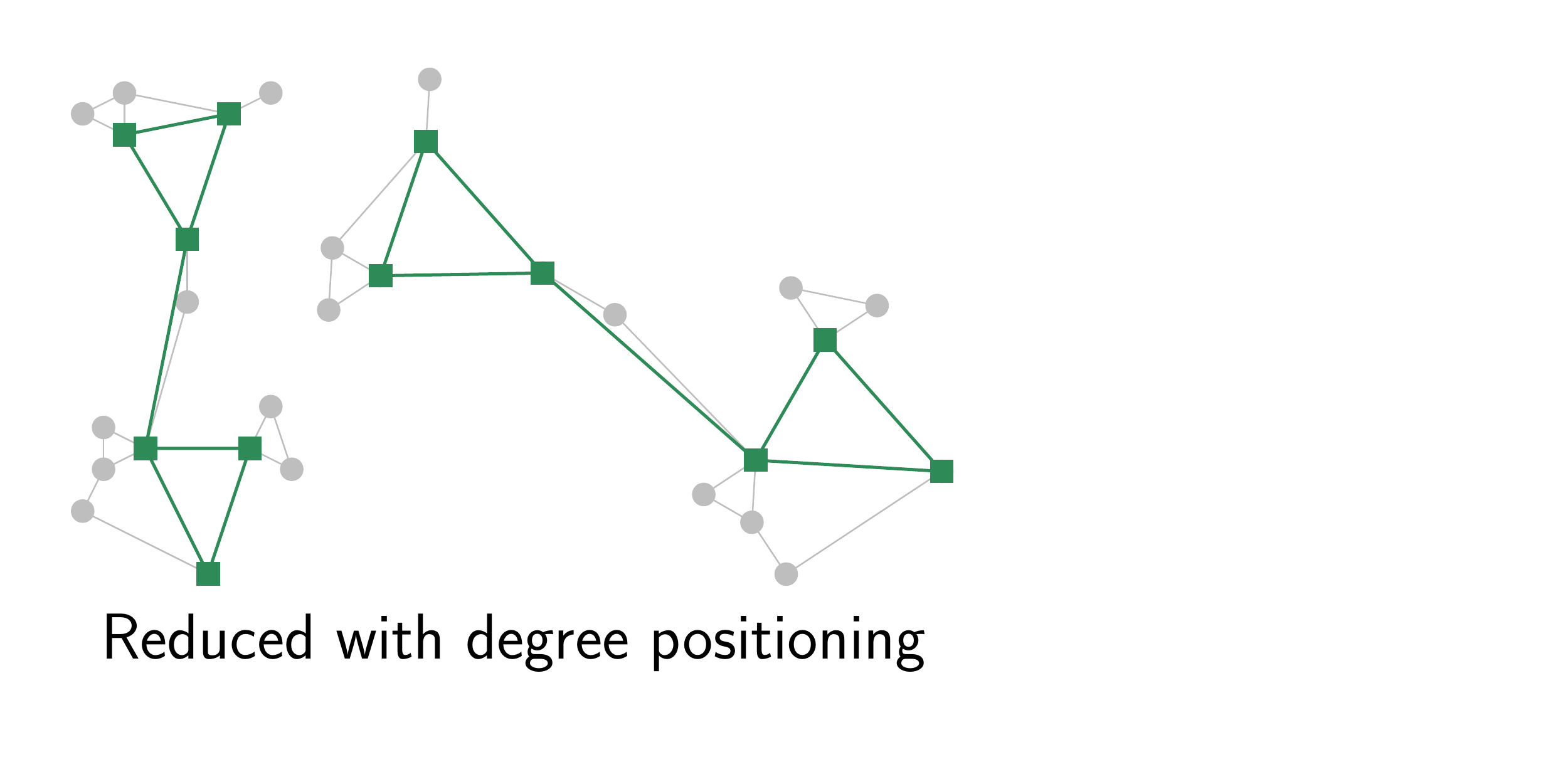}
    \caption{Illustration of the property of the coarsening procedure when a spatial graph is modified by applying a $(R,A,k)\in Sym(2)$}
    \label{fig:properties_illustration}
\end{figure}

\subsection{Parameter equivariance property for the spatial graph coarsening}\label{sec:par_equiv}
We start by looking at the spatial graph coarsening described in Section \ref{sec:graph_coarsening}. We need to define how the group action is acting on the parameter $\theta$ of the coarsening procedure:
\begin{definition}
    Given a spatial coarsening function $f_\theta:\mathcal{G}\times\mathcal{G}\rightarrow\mathcal{G}$ and an element of the symmetric group $(R,A,k)\in Sim(p)$, we define $\phi(\theta,(R,A,k))=\phi(\theta,k)=k\theta$ and the scaled spatial coarsening function $f_{\phi(\theta,k)}$.
\end{definition}
As a first step, we prove that the spatial graph coarsening is parameter equivariant:
\begin{proposition}\label{prop:coarsening}
    The spatial coarsening is parameter equivariant under the similarity group action:

    $$\psi(f_\theta(G),(R,A,k))=f_{\phi(\theta,k)}(\psi(G,(R,A,k))) \,.$$
\end{proposition}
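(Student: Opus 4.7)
The proof proceeds by tracking how each ingredient of the coarsening function $f_\theta$ transforms under the action $\psi(\cdot, (R,A,k))$. The key observation is that under a similarity $(R,A,k)\in Sim(p)$, edge lengths scale by the factor $k$: for every edge $(u,v)\in E$,
\begin{equation*}
\|(kRx_u+A)-(kRx_v+A)\|_p = k\|R(x_u-x_v)\|_p = k\,\ell_{u,v},
\end{equation*}
since $R$ is orthogonal. Therefore the thresholding condition $\ell'_{u,v} \leq k\theta$ on the transformed graph is equivalent to $\ell_{u,v}\leq \theta$ on the original graph, so the abstract subgraphs satisfy $sub(\psi(G,(R,A,k)),k\theta) = sub(G,\theta)$ as graphs on $V$.

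First, I would check the vertex component. Since the subgraphs coincide, they have identical connected components, hence the partitions of $V$ agree:
\begin{equation*}
f^V_{\phi(\theta,k)}(\psi(G,(R,A,k))) = f^V_\theta(G).
\end{equation*}
Next, for the edge component, an edge $(V_i,V_j)$ exists in the coarsened graph iff some $(u,v)\in E$ connects $V_i$ and $V_j$, a condition depending only on $E$ and the partition, both of which are unchanged. Hence $f^E_{\phi(\theta,k)}(\psi(G,(R,A,k))) = f^E_\theta(G)$.

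The spatial component is the part where the two positioning strategies must be treated separately. For the \emph{average positioning}, linearity of the sum gives, for each hypernode $V_i$,
\begin{equation*}
\frac{1}{|V_i|}\sum_{u\in V_i}(kRx_u + A) = kR\left(\frac{1}{|V_i|}\sum_{u\in V_i}x_u\right) + A,
\end{equation*}
which is exactly the action of $(R,A,k)$ on $f^X_\theta(G)_i$. For the \emph{degree positioning}, I would use that all edge weights get multiplied by $k>0$, so every weighted degree is scaled by the same positive constant, and hence $\arg\max_{v\in V_i}\deg(v)$ is preserved (with any fixed tie-breaking rule). The selected vertex $u$ then has new position $kRx_u+A$, again matching the action on $f^X_\theta(G)_i$.

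Combining the three identities yields
\begin{equation*}
f_{\phi(\theta,k)}(\psi(G,(R,A,k))) = (f^V_\theta(G),\, f^E_\theta(G),\, kR f^X_\theta(G)+A) = \psi(f_\theta(G),(R,A,k)),
\end{equation*}
which is the claim. The only subtle point is the potential ambiguity of $\arg\max$ in the degree positioning, but since the scaling is uniform the ordering of degrees is preserved, so any deterministic tie-breaking rule makes the assignment equivariant; I expect this to be the only place where a word of justification beyond bookkeeping is required.
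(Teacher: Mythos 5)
Your proof is correct and follows essentially the same route as the paper's: edge lengths scale by $k$ under the similarity, so $sub(G,\theta)=sub(\psi(G,(R,A,k)),k\theta)$, giving identical partitions and hyperedges, with the two positioning rules checked separately via linearity and preservation of the degree ordering. Your remark on tie-breaking in the $\arg\max$ and on degrees scaling by $k$ (rather than being literally unchanged) is in fact slightly more careful than the paper's own wording.
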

\begin{proof}
Consider a graph $G=(V,E,X)$ with node positions $X\in\mathbb{R}^p, p=2,3$ and a length threshold $\theta\in \R_+$. The coarsening procedure introduced in Section \ref{sec:graph_coarsening} that defines the coarsened graph $f_\theta(G)=(f^V_\theta(G),f^E_\theta(G),f^X_\theta(G))$ is governed by the edge lengths of the original spatial graph $G$. It is trivial to see that the coarsening procedure is invariant under rotation and translation, as they are isometries and thus they leave the edge lengths unchanged. The coarsening procedure is also equivariant under scaling, if the parameter $\theta$ is properly rescaled. More precisely, let us consider $(k,R,A)\in Sim(p)$ where the scaling factor is $k\in \mathbb{R}_+$, the resulting graph $G'=(V,E,kRX+A)$ has edge lengths $\ell'_{u,v}=k \ell_{u,v}$. We want to prove that $$\psi(f_\theta(G),(R,A,k))=f_{\phi(\theta,k)}(\psi(G,(R,A,k)))$$.

To do so, we need all the three components of the coarsening function to respect the property:
\[ 
(f_\theta^V(G),f_\theta^E(G),kR(f_\theta^X(G))+A)=
(f^V_{\phi(\theta,k)}(G'),f^E_{\phi(\theta,k)}(G'),f^X_{\phi(\theta,k)}(G')) \,,
\]
where we recall that $G' = \psi(G,(R,A,k)) = (V,E,kRX+A)$.

Let's start from the node function and prove $f_\theta^V(G)=f_{\phi(\theta,k)}^{V}(G')$.  Let us recall that $sub(G, \theta)= (V, E_\theta)$ is the combinatorial (not spatial) subgraph induced by the edges of length smaller that $\theta$, such that $E_\theta = \{ (u,v) \in E, \ s.t. \ \ell_{u,v} \leq \theta \}$. Knowing that $G'=(V,E,kRX+A)$ with edge lenghts $\ell'_{u,v}=k\ell_{u,v}$ it is easy to see that $sub(G,\theta)=sub(G',k\theta)$. Thus, $f_\theta^V(G)=f_{\phi(\theta,k)}^{V}(G')$ is the same partition of nodes obtained from the same connected components of $sub(G,\theta)$.

The function over the edges follows naturally, as the edges in the coarsened graph are defined from the hypernodes, so $f^E_\theta(G)=f^E_{k\theta}(G')$.

As last step, we need to prove the equality on the coordinates $f_{k\theta}^X(G') = kRf_\theta^X(G)+A$. This third element of the coarsening function $f_\theta^X:\mathbb{R}^p\rightarrow \mathbb{R}^p$ assign position with either degree or average positioning. For the average positioning of $V_i, i=1,\dots,K$:
 \[  f_\theta^X(G)_i = \frac{1}{\left| V_i\right|} \sum\limits_{u \in V_i} x_u. \]
As the partition of the nodes is the same, we can write for all hypernode number $i$:

     \[  f_{k\theta}^X(G')_i = \frac{1}{\left| V_i\right|} \sum\limits_{u \in V_i} x'_u=\frac{1}{\left| V_i\right|} \sum\limits_{u \in V_i} (kRx_u+A) = kR\frac{1}{\left| V_i\right|}\sum\limits_{u \in V_i} x_u+A=kRf_\theta^X(G)_i+A \,. \]
    
    Similarly, the degree positioning assign to the new hypernode $V_i,i=1,\dots,m$ is defined by searching for $u = \arg\max_{v \in V_i} deg(v)$. As the connectivity structure of the graph $G'$ is the unchanged, the degree is also the same. We then conclude that: 
    \[  f_{k\theta}^X(G')_i = x'_{u} = kRx_u+A=kRf_\theta^X(G)_i+A \,. \]
    As all the three elements of the function are equal we can conclude that the coarsening function is parameter equivariant:
    \[\psi(f_\theta(G),(R,A,k))=f_{\phi(\theta,k)}((V,E,kRX+A)) \,.\]
\end{proof}
The relationships between the elements in Proposition \ref{prop:coarsening} are detailed in the diagram below:\\

\begin{tikzpicture}[>=latex]
  \node (G) {$G=(V,E,X)$};
  \node (G') [right=6cm of G] {$G'=(V,E,kRX+A)$};
  \node (Gtheta)  [below=1.5cm of G] {$f_{\theta}(G)$};
  \node (G'theta)  [below=1.5cm of G']{$f_{\phi(\theta,k)}(G')$};

  \draw[->] (G) -- (G') node[midway, above] {$\psi(G,(R,A,k))$};
  \draw[->] (Gtheta) -- (G'theta) node[midway, above] {$\psi(f_\theta(G),(R,A,k))$};
  \draw[->] (G) -- (Gtheta) node[midway, left] {$f_\theta(G)$};
  \draw[->] (G') -- (G'theta) node[midway, right] {$f_{\phi(\theta,k)}(G')$};
\end{tikzpicture}

\begin{remark}
    The coarsening function is fully equivariant under the Euclidean group actionn $\mathbb{E}(d)$, where there is no scaling component. 
\end{remark}

\subsection{Impact of the similarity transformation on the persistent diagrams.}\label{sec:action_scoring}

Once we have proved the property of the coarsening procedure defined in Section~\ref{sec:graph_coarsening}, we would like to prove the equivariance property for the proposed topological spatial coarsening procedure defined in Section~\ref{sec:topo_graph_coarsening}. For that, we first prove that the persistent diagram is invariant under rotation and translation of the object and it is equivariant under scaling. 
\begin{proposition}\label{prop:pd}
    Consider two spatial graphs $G=(V, E, X)$ and $G'=\psi(G,(R,A,k))=(V,E,kRX+A)$, where $(R,A,k)\in Sim(p)$. The persistent diagrams: $$PD(G')=k\cdot PD(G)$$ 
    meaning that $(kr_b,kr_d)\in PD(G')$ if and only if $(r_b,r_d)\in PD(G)$.
\end{proposition}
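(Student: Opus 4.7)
The plan is to reduce the statement to a simple scaling property of the triangle-aware graph filtration, from which the claim on persistence diagrams follows by standard functoriality of persistence.

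First, I would show that shortest path distances transform cleanly under the similarity action. Since rotations and translations are isometries of $\R^p$, the edge length function satisfies $\ell'_{u,v} = k \, \ell_{u,v}$ for all $(u,v) \in E$ (where $\ell'$ denotes lengths in $G'$). The shortest path distance $d_{G'}(u,v)$ is a minimum over the same set of node-sequences as for $d_G$ (the combinatorial structure $(V,E)$ is unchanged) of sums of edge lengths, each of which is multiplied by $k$. Hence $d_{G'}(u,v) = k \, d_G(u,v)$ for every pair $u,v \in V$.

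Next, I would verify that the triangle-aware filtrations of $G$ and $G'$ agree after rescaling the filtration parameter by $k$, i.e.\ $\widetilde{\mathcal{C}}_r(G) = \widetilde{\mathcal{C}}_{kr}(G')$ for every $r \geq 0$. This is checked dimension by dimension from the definition: vertices appear at scale $0$ in both; a 1-simplex $\{u,v\}$ appears in $G$ at scale $d_G(u,v)$ and in $G'$ at $d_{G'}(u,v) = k \, d_G(u,v)$; a 2-simplex $\{u,v,w\}$ appears in $G$ at scale $\min\{d_G(u,v)+d_G(v,w),\, d_G(u,w)+d_G(v,w),\, d_G(u,v)+d_G(u,w)\}$, and the corresponding scale in $G'$ is exactly $k$ times this quantity; for $k$-simplices with $k \geq 3$, the appearance scale is defined as the maximum scale of the faces, so it also scales by $k$. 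Equivalently, the two filtrations are identical as filtered simplicial complexes up to the reparametrization $r \mapsto kr$.

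From this, the persistence diagrams are related by the same rescaling. Indeed, any homology class born at scale $r_b$ and dying at scale $r_d$ in $\widetilde{\mathcal{C}}(G)$ corresponds, via the (identity on simplices) filtration-preserving isomorphism, to a class born at $kr_b$ and dying at $kr_d$ in $\widetilde{\mathcal{C}}(G')$, and conversely. Therefore $(r_b, r_d) \in PD(G)$ if and only if $(kr_b, kr_d) \in PD(G')$, which is the claim $PD(G') = k \cdot PD(G)$.

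The only mildly delicate point is the 2-simplex rule in the triangle-aware filtration, since its scale is defined through a minimum of sums rather than a single distance; but since every term inside the minimum is a sum of shortest path distances, the scaling by $k$ factors out of the minimum and the argument goes through. All other steps are direct consequences of $d_{G'} = k\, d_G$ and the definition of persistence.
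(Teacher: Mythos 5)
Your proof is correct and follows essentially the same route as the paper: establish $d_{G'}=k\,d_G$, deduce that the triangle-aware filtration satisfies $\widetilde{\mathcal{C}}_r(G)=\widetilde{\mathcal{C}}_{kr}(G')$ (with the explicit check of the minimum-of-sums rule for 2-simplices), and conclude that birth and death scales are multiplied by $k$. The only difference is cosmetic: you absorb the rotation/translation part directly into the observation that all appearance times depend only on edge lengths, whereas the paper cites the standard isometry-invariance of persistence diagrams and then treats the pure scaling case.
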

\begin{proof}
    The invariance under rotation and translation has been proven in the literature of persistence diagrams \citep{chazal2009proximity,chazal2014persistence}; the equivariance under scaling follows from the definition of persistence diagrams in the context of spatial graphs. 
    More precisely, we want to show that that each simplex appearing in the filtration of the original graph $G$ at $r_b$, it appears at $k r_b$ in the filtration of the modified graph $\psi(G, (R,A,k))$. Therefore, each topological feature is born and died at scaled time, yielding the result. Therefore, we need to prove the result on simplices. 
    In the following, let us consider the action of a similarity with only a scaling part, that is $G' = \psi(G, (Id, 0, k))$, where $k > 0$, as we proved that the isometry part (rotation and translation) leaves the persistent diagram unchanged. 
    Therefore, the scaled coordinates are given by $X' = k X$ where $X$ are the Euclidean coordinates of the original graph $G$. Recall that the filtration $\mathcal{C}(G) = (\mathcal{C}_r(G))_{r \geq 0}$ is defined as the Vietoris-Rips filtration where the metric is given by the shortest-path distance over the graph (see Section~\ref{sec:adaptation_for_graphs}). From the definition of this filtration, for each scale $r \geq 0$ we have that $\mathcal{C}_{kr}(G')=\mathcal{C}_{r}(G)$, as all shortest-path distances in $G'$ are the same as the ones in $G$ except that they are scaled by $k$.
    In the specific case of spatial graphs, in Section~\ref{sec:adaptation_for_graphs}, we advocated the use of a modified filtration $\tilde{\mathcal{C}}$ to better capture cycles generated by triangles in the graph. To transpose the previous result to this modified filtration, we need to study the filtration arrival of the 2-simplices. Given a 2-simplex $(v_1,v_2,v_3)$ that appears at scale $r$ in filtration $\tilde{\mathcal{C}}(G)$, let us check that it appears at scale $kr$ in filtration $\tilde{\mathcal{C}}(G')$. 
    Let $x_1$, $x_2$ and $x_3$ (resp. $x_1'$, $x_2'$ and $x_3'$) represent the coordinates of $v_1$, $v_2$ and $v_3$ in $G$ (resp. $G'$) (recall that this makes sense as $G$ and $G'$ have the same set of nodes). 
    From the definition of $\tilde{\mathcal{C}}(G')$, the simplex $(v_1,v_2,v_3)$ appears at scale: 
    \begin{align*}
        & min\left\{d_{G'}(v_1,v_2)+d_{G'}(v_2,v_3),d_{G'}(v_1,v_3)+d_{G'}(v_2,v_3),d_{G'}(v_1,v_2)+d_{G'}(v_1,v_3)\right\} \\
        = & min\left\{k \ d_{G}(v_1,v_2)+k \ d_{G}(v_2,v_3),k \ d_{G}(v_1,v_3)+k \ d_{G}(v_2,v_3),k \ d_{G}(v_1,v_2)+k \ d_{G}(v_1,v_3)\right\} \\
        = & k \ min\left\{d_{G}(v_1,v_2)+d_{G}(v_2,v_3),d_{G}(v_1,v_3)+d_{G}(v_2,v_3),d_{G}(v_1,v_2)+d_{G}(v_1,v_3)\right\} \\
        = & k r  \qquad (\text{From assumption on the scale of appearance in $\tilde{\mathcal{C}}(G)$}) \,.
    \end{align*}
    
    This concludes the proof.
\end{proof}

\subsection{Equivariance of the topological spatial graph coarsening}\label{sec:optimal_coarsening_prop}

Given the two propositions above, we can now look at the whole topological spatial graph coarsening procedure and show that is it equivariant under the symmetric group action. 

\begin{proposition}
    Given a graph $G=(V,E,X)$ and a group element $(R,A,k)\in Sim(p)$, the topologically informed spatial coarsening method $F:\mathcal{G}\rightarrow\mathcal{G}$ where $F(G)=f_{\theta^*}(G)$ and  $\theta^*=argmin_{\theta\in\mathbb{R}_+}S_\theta(G)$ is equivariant under the symmetric group action:
    
    $$F(\psi(G,(R,A,k)))=\psi(F(G),(R,A,k))$$
    \label{prop:equivariance_final}
\end{proposition}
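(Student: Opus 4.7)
The strategy is to show that the scoring function $S_\theta$ transforms compatibly under the similarity action, so that the change of variable $\theta \mapsto k\theta$ sends the minimization problem on $G$ to the minimization problem on $G' := \psi(G,(R,A,k))$. Combined with Proposition~\ref{prop:coarsening}, this will immediately yield the equivariance of $F$.

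First, I would analyze each term of $S_\theta$ separately. By Proposition~\ref{prop:coarsening}, the coarsened graph $f_{k\theta}(G')$ is just the isometric/scaled copy $\psi(f_\theta(G),(R,A,k))$ of $f_\theta(G)$; in particular, its node partition and edge set coincide (combinatorially) with those of $f_\theta(G)$, so $|f_{k\theta}^E(G')| = |f_\theta^E(G)|$ and $|E(G')|=|E(G)|$. Hence the first term of $S_\theta$, namely $|f_\theta^E(G)|/|E|$, is preserved by the change of variable $\theta \mapsto k\theta$. For the topological term, I would apply Proposition~\ref{prop:pd} twice: $PD(G') = k \cdot PD(G)$ and $PD(f_{k\theta}(G')) = PD(\psi(f_\theta(G),(R,A,k))) = k \cdot PD(f_\theta(G))$. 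Since the bottleneck distance is defined through the $\ell_\infty$ norm, which is $1$-homogeneous, it follows that
\begin{equation*}
\db\bigl(PD(G'), PD(f_{k\theta}(G'))\bigr) = k \cdot \db\bigl(PD(G), PD(f_\theta(G))\bigr).
\end{equation*}

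Next, I would handle the normalizing constant $\lambda$. By the change of variable $\theta' = k\theta$ (which is a bijection of $\R_+$ since $k>0$), the maximum in $\lambda(G')^{-1} = \max_{\theta'} \db(PD(G'),PD(f_{\theta'}(G')))$ equals $k \cdot \max_\theta \db(PD(G),PD(f_\theta(G))) = k \cdot \lambda(G)^{-1}$. Therefore $\lambda(G') = \lambda(G)/k$, and the two factors $k$ cancel, giving
\begin{equation*}
S_{k\theta}(G') \;=\; \frac{|f_\theta^E(G)|}{|E|} + \frac{\lambda(G)}{k} \cdot k \cdot \db\bigl(PD(G), PD(f_\theta(G))\bigr) \;=\; S_\theta(G).
\end{equation*}

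Finally, taking the argmin on both sides and using that $\theta \mapsto k\theta$ is a bijection of $\R_+$, we obtain $\theta^*(G') = k\,\theta^*(G) = \phi(\theta^*(G),k)$. Applying Proposition~\ref{prop:coarsening} at $\theta = \theta^*(G)$ then gives
\begin{equation*}
F(G') = f_{\theta^*(G')}(G') = f_{\phi(\theta^*(G),k)}(\psi(G,(R,A,k))) = \psi(f_{\theta^*(G)}(G),(R,A,k)) = \psi(F(G),(R,A,k)),
\end{equation*}
which is the desired equivariance. The only subtle point is the rescaling of $\lambda$: without it, the two terms of $S$ would no longer be on the same scale after applying $\psi$, and the argmin would in general shift in a non-trivial way; this is the step I expect a careful reader to want made explicit. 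Uniqueness of the argmin can be handled by picking a canonical minimizer (e.g., the smallest one), in which case both sides of the final identity still match because $\theta \mapsto k\theta$ preserves order.
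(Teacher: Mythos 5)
Your proposal is correct and follows essentially the same route as the paper's proof: use Proposition~\ref{prop:coarsening} for the edge-ratio term, Proposition~\ref{prop:pd} together with the homogeneity of the bottleneck distance for the topological term, the rescaling $\lambda(G') = \lambda(G)/k$, and the change of variable $\theta \mapsto k\theta$ to conclude $S_{k\theta}(G') = S_\theta(G)$ and hence $\theta^*(G') = k\theta^*(G)$. Your closing remark on handling non-unique minimizers by a canonical (order-preserving) choice is a small point the paper leaves implicit.
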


\begin{proof}
We want to prove that
\[F(\psi(G,(R,A,k)))=f_{\beta^*}(\psi(G,(R,A,k))\]
\[=f_{\beta^*}(V,E,kRX+A)=\psi(F(G),(R,A,k))=\psi(f_{\theta^*}(V,E,X),(R,A,k)) \,,\]
with $\beta^* = k \theta^*$. 
As we know from Proposition 1 that \[\psi(f_\theta(G),(R,A,k))=f_{\phi(\theta,k)}((V,E,kRX+A)),\] we need to prove that the optimal scaling parameters are related as follows: $$\beta^*:=argmin_{\beta\in \R_+}S_\beta(G')=\phi(\theta^*,k) = k \theta^* \,.$$

Let us recall the scoring function: $S_\theta(G)= \frac{|f_\theta^E(G)|}{|E|} + \lambda(G) \  d_B(PD(G),PD(f_\theta(G)))$. For $S_\beta(G')$, we showed in Proposition \ref{prop:coarsening} that the first additive term is the same for $\beta=k\theta$ as $f^E_\theta(G)=f^E_\beta(G')$.
    
From Proposition \ref{prop:pd} we have that $PD(\psi(G, (R,A,k)) = k \cdot PD(G)$. Similarly, we have 
    $$PD(f_\beta(G')) \overset{\text{Prop.~\ref{prop:coarsening}}}{=} PD( \psi(f_\theta(G), (R,A,k) ) \overset{\text{Prop.~\ref{prop:pd}}}{=} k \cdot PD(f_\theta(G)) \,.$$
 Now, we need to prove that $d_B(PD(G'),PD(f_{\beta}(G'))) = k \cdot d_B(PD(G),PD(f_\theta(G)))$. Recall that the bottleneck distance is defined as 
 $$\db(\mu, \nu) = \underset{\pi \in \Pi(\mu, \nu)}{\inf} \underset{x \in \pi(x)}{\sup} \|x - \pi(x)\|_\infty \,.$$
 Remark that the scaling of the points by $k$ will not affect the couplings and as the $\norm{}_\infty$ is a positively homogeneous function we have that: 
    \begin{align*}
        &d_B(PD(G'),PD(f_{k\theta}(G'))) \\
        =& d_B(k \cdot PD(G), k \cdot PD(f_{\theta}(G))) \\
        =& k \cdot d_B( PD(G), PD(f_\theta(G))) \,.
    \end{align*}
    
    Moreover, the normalizing factors $\lambda(G)$ and $\lambda(G')$ satisfy
    \begin{align*}
         \lambda(G') &= \left[\max_{\beta \in \R_+} \ d_B(PD(G'),PD(f_\beta(G'))) \right]^{-1} \\
        &= \left[\max_{\theta \in \R_+} \ d_B(PD(G'),PD(f_{k \theta}(G'))) \right]^{-1}  \quad (\text{change of parameter $\beta=k \theta$})\\
        &= \left[\max_{\theta \in \R_+} \  k \cdot d_B(PD(G),PD(f_{\theta}(G))) \right]^{-1} \quad (\text{Previous result on Botleneck distance)} \\
        &= k^{-1} \left[\max_{\theta \in \R_+} \  d_B(PD(G),PD(f_{\theta}(G))) \right]^{-1} = k^{-1} \cdot \lambda(G) 
    \end{align*}
    
    Bringing together the two elements of the scoring function, we have that:
    \begin{align*}
        S_{k\theta}(G') =& \frac{|f_{k\theta}^E(G')|}{|E|} + \lambda(G') \ d_B(PD(G'),PD(f_{k\theta}(G'))) \\
        \overset{\text{Prop.~\ref{prop:coarsening}}}{=}  &\frac{|f_{\theta}^E(G)|}{|E|} + \lambda(G') \  d_B(PD(G'),PD(f_{k\theta}(G'))) \\
        = & \frac{|f_{\theta}^E(G)|}{|E|} + \frac{k}{k} \lambda(G) \  d_B(PD(G),PD(f_{\theta}(G))) = S_\theta(G)
    \end{align*}
This directly gives that 
$$\beta^*= \arg\min_{\beta \in \R_+} S_\beta(G') = k \cdot \arg\min_{\theta \in \R_+} S_{k \theta}(G')  = k \cdot \arg\min_{\theta \in \R_+} S_{\theta}(G) = k \theta^* \,,$$
which finishes the proof.
\end{proof}

\begin{remark}
The results in this section are based on the hypothesis that the nodes of the graph are embedded in spaces with a flat curvature ($CAT(0)$ curvature) - as the Euclidean Space - and that the edge length is the Euclidean distance between the coordinates. These hypotheses ensure that any transformation of the object is preserving the topology and the geometry of the graph. Different embedding of the nodes of the graph (for example an hyperbolic embedding of the nodes of the graph) change the effect of the group action, and consequently the coarsening procedure. However, these considerations are out of the scope of this paper.

\end{remark}

\section{Experiments and Real World Examples}
\label{sec:experiments}

We illustrate the proposed method on simulated and real world data. The simulated data consists of a spatial graph where the nodes are positioned on a ring with a random effect and the edges are sampled at random. We test the spatial graph coarsening procedure with average degree positioning. We then move to two real world dataset. In the first, we simplify the road network of the city of Marseille (France) using the degree positioning \citep{boeing2025modeling}. In the second, we test the effect of the topological spatial graph coarsening procedure of a classification task for a dataset of network fungi \citep{fricker2025fungi}.

\subsection{Practical Implementation}

When applying our topological spatial graph coarsening on a graph $G = (V,E,X)$, we need to optimize the score function $\theta \to S_\theta(G)$. Unfortunately, this optimization problem is untractable with iterative optimization algorithms (like gradient descent) as the score is a priori neither differentiable nor convex with respect to $\theta$. In practice, we evaluate the score on a grid $\Theta_m$ of size $m$ and select the minimizer on this grid. As the values of $\theta$ act as thresholds for the edge lengths, we do not use a linear grid and rather propose a grid that is adapted to the distribution of edge lengths. In practice, the grid is set as $\Theta_m = \{ q_\alpha(\mathcal{L}) , \alpha \in \{ 1/(m+1), \dots, m/(m+1) \} \}$, where $q_\alpha(\mathcal{L})$ is the empirical $\alpha$-quantile of the edge lengths $\mathcal{L} = \{ \ell_{u,v}, \ (u,v) \in E \}$. With this choice, when considering the next $\theta$ in the grid, we coarsen the graph by collapsing an additional proportion of edges of $1/(m+1)$.

In practice, the computation of the persistent diagram for the original graph and the ones of the reduced graphs for each $\theta$ in the grid $\Theta_m$ can become computationally heavy. To reduced this computational cost, we may limit the scales $r$ that we explore in the filtration and only consider a truncated version $(\widetilde{\mathcal{C}}_r(G))_{0 \leq r \leq r_{\max}}$. We may choose $r_{\max}$ as a fraction of the diameter in the graph, that is the maximal shortest-path distance over the pairs of nodes.

\subsection{Single Spatial Graph Coarsening}
We start by considering a single spatial graph generated in the following fashion. Consider $n=100$ points uniformly sampled on an annulus with outer radius $1$ and inner radius $0.7$. We include in the graph only a fraction $p=0.1$ of all the $n(n-1)/2$ possible edges, those with smallest length (given by Euclidean distance).  In Figure \ref{fig:annulus_reduction} (left), we can see original and the reduced graph. The position of the nodes in the coarsened graph follow the average positioning  procedure described in Section \ref{sec:graph_coarsening}. The optimal reduction is chosen by looking at the score values in Figure \ref{fig:annulus_reduction} (right). We plot the score $S_\theta(G)= |E(\theta)|/|E| + \lambda \cdot  \db(PD(G), PD(G(\theta)))$ defined as in Equation \eqref{eq:score} and its two components, the ratio of edges and the bottleneck distance scaled by $\lambda$ set as the inverse of the maximum of the bottleneck distance. The optimal threshold $\theta^\star$ is found at the minimum of the scoring function.
\begin{figure}[H]
\centering
\begin{minipage}{0.53\linewidth}
    \centering
    \includegraphics[width=\linewidth]{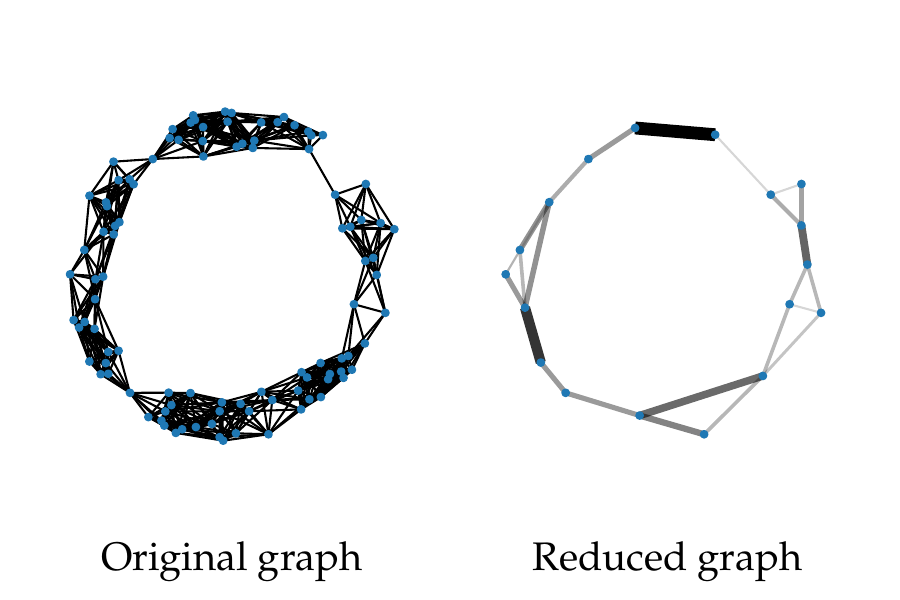}
    \caption{Synthetic annulus graph. \\ Left: original graph; right: reduced graph.}
    \label{fig:annulus_graphs}
\end{minipage}
\hspace{0.01\linewidth}
\begin{minipage}{0.44\linewidth}
    \centering
    \includegraphics[width=\linewidth]{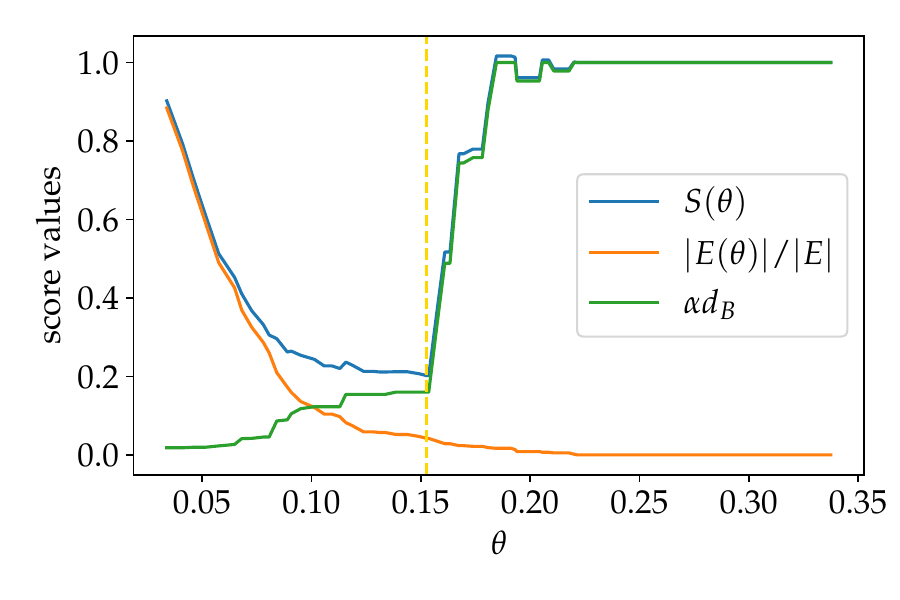}
    \caption{Annulus graph. Scores values with respect to the reduction threshold $\theta$.}
    \label{fig:annulus_reduction}
\end{minipage}
\end{figure}

As a real world example, we select the road network of the city of Marseille in south of France (the data are publicly available in the OSMnx package by \cite{boeing2025modeling}). The graph is made of all the portions of road of the following type: motorway, motorway link, primary road, secondary road,  secondary road link.
Road networks are a perfect example of spatial graphs, with natural constrains on the road topology and with high complexity in terms of nodes and edges. The original and the reduced graphs are presented in Figure \ref{fig:road_graphs}. The position of the nodes is computed using the degree positioning described in Section \ref{sec:graph_coarsening}. For each new hypernode, we choose the position of the original node with the highest connectivity - i.e. higher degree. In this road network example, this is equivalent to assign the position of the crossing between the highest number of streets.
\begin{figure}[H]
\centering
\begin{minipage}{0.53\linewidth}
    \centering
    \includegraphics[width=\linewidth]{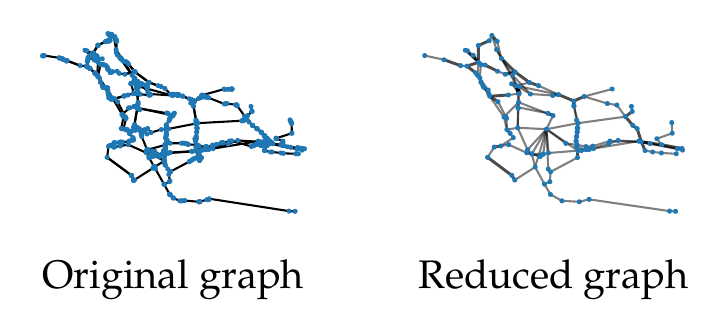}
    \caption{Road network of Marseille, France. \\ Left: original graph; right: reduced graph.}
    \label{fig:road_graphs}
\end{minipage}
\hspace{0.01\linewidth}
\begin{minipage}{0.44\linewidth}
    \centering
    \includegraphics[width=\linewidth]{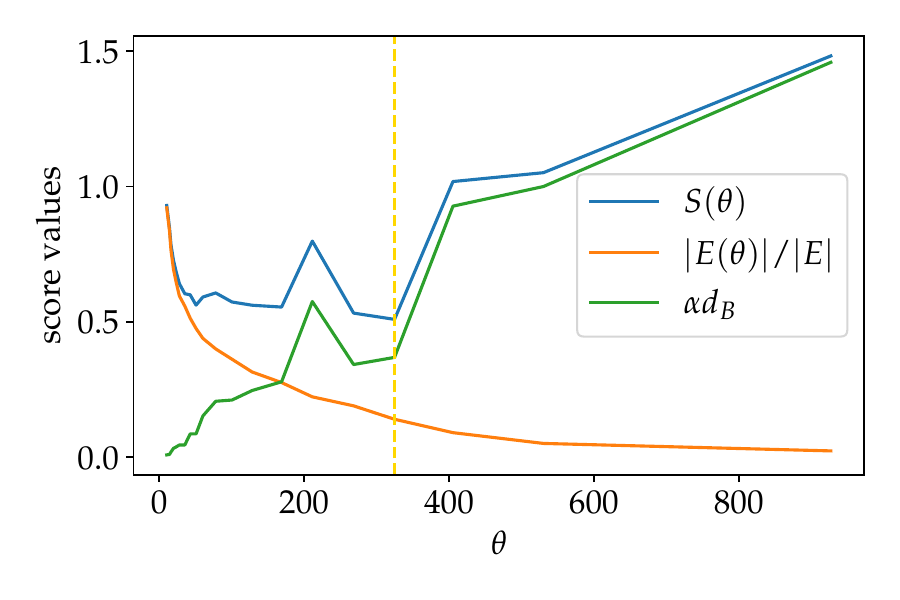}
    \caption{Score function for the road network coarsening with respect to the reduction threshold.}
    \label{fig:road_scores}
\end{minipage}
\end{figure}

\subsection{Application on a classification task}

We evaluate our topological and optimized coarsening procedure on a fungi network dataset. We consider a real-world dataset of fungal mycelia extracted from \cite{fricker2025fungi}. Each fungus is represented as a spatial graph where the edges represent mycelia \footnote{The mycelium is the root-like structure of a fungus.} branches and the nodes are placed where the mycelia branches separate or merge (see Figure~\ref{fig:fungus_images}). The dataset contains fungi of different species, grown in various conditions, and exposed to attacks of grazers of various types. Each type of grazer modifies the growth of the fungus, resulting in a change of the structural properties of the network \citep{boddy2010fungal} For example, the structural changes between two fungi grown with or without grazers are shown in Figure~\ref{fig:fungus_images}).
For more consistency through the dataset and for computational time considerations, we select a subset of the graphs with a number of nodes smaller than 1500, resulting in a data set of 128 graphs, spanning three fungus species (Phallus impudicus,  Phanerochaete velutina, Resinicium bicolour). The nodes of the graph are given with 2D spatial positions and the edges are weighted with biological-inspired resistance values that indicate how easy it is to transport nutrient along the given edge. This resistance value is proportional to the length of the edge and inversely proportional to the cross-section area.\\ 

\begin{figure}
    \centering
    \includegraphics[height=0.4\linewidth]{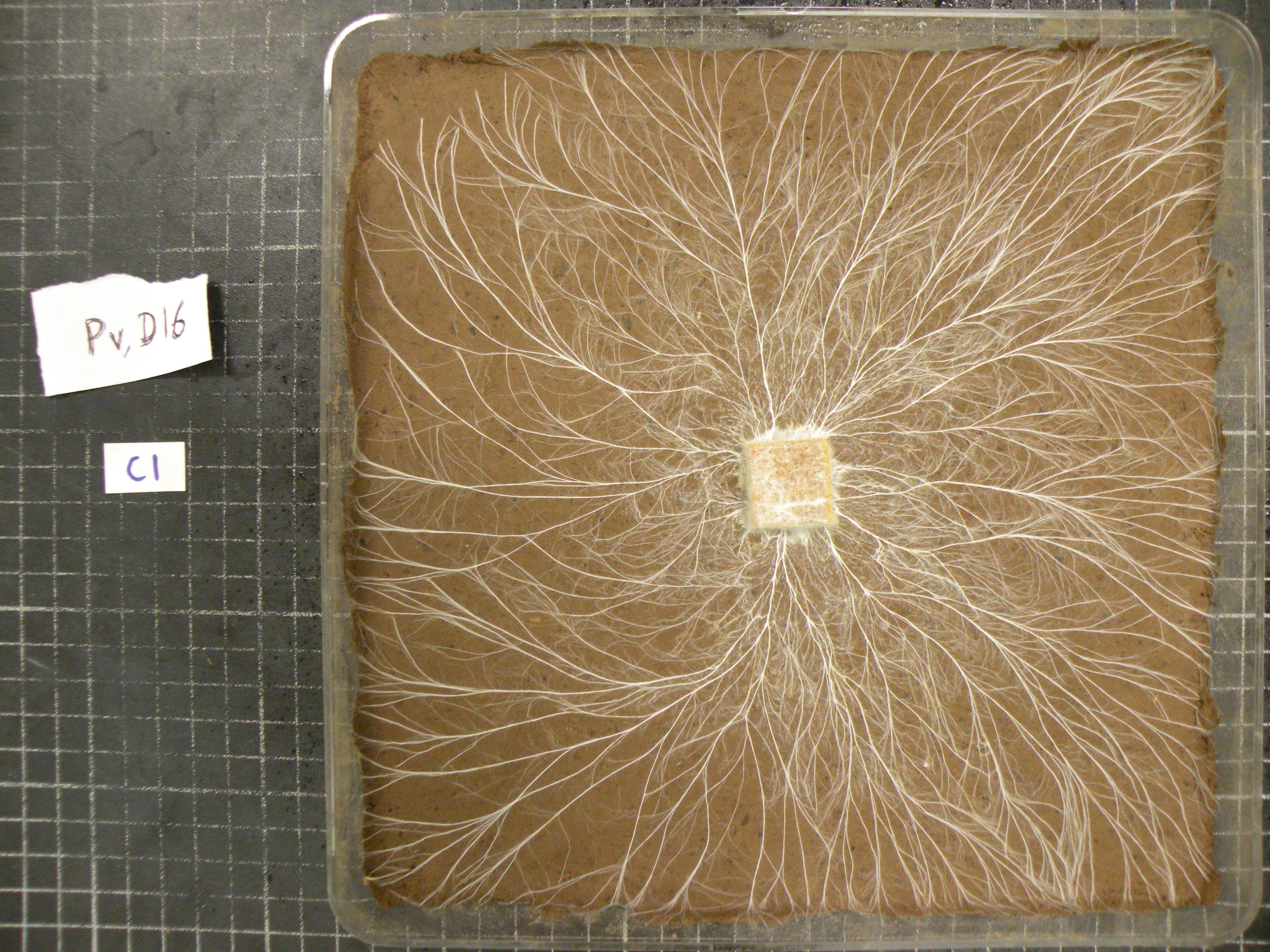}
    \includegraphics[height=0.4\linewidth]{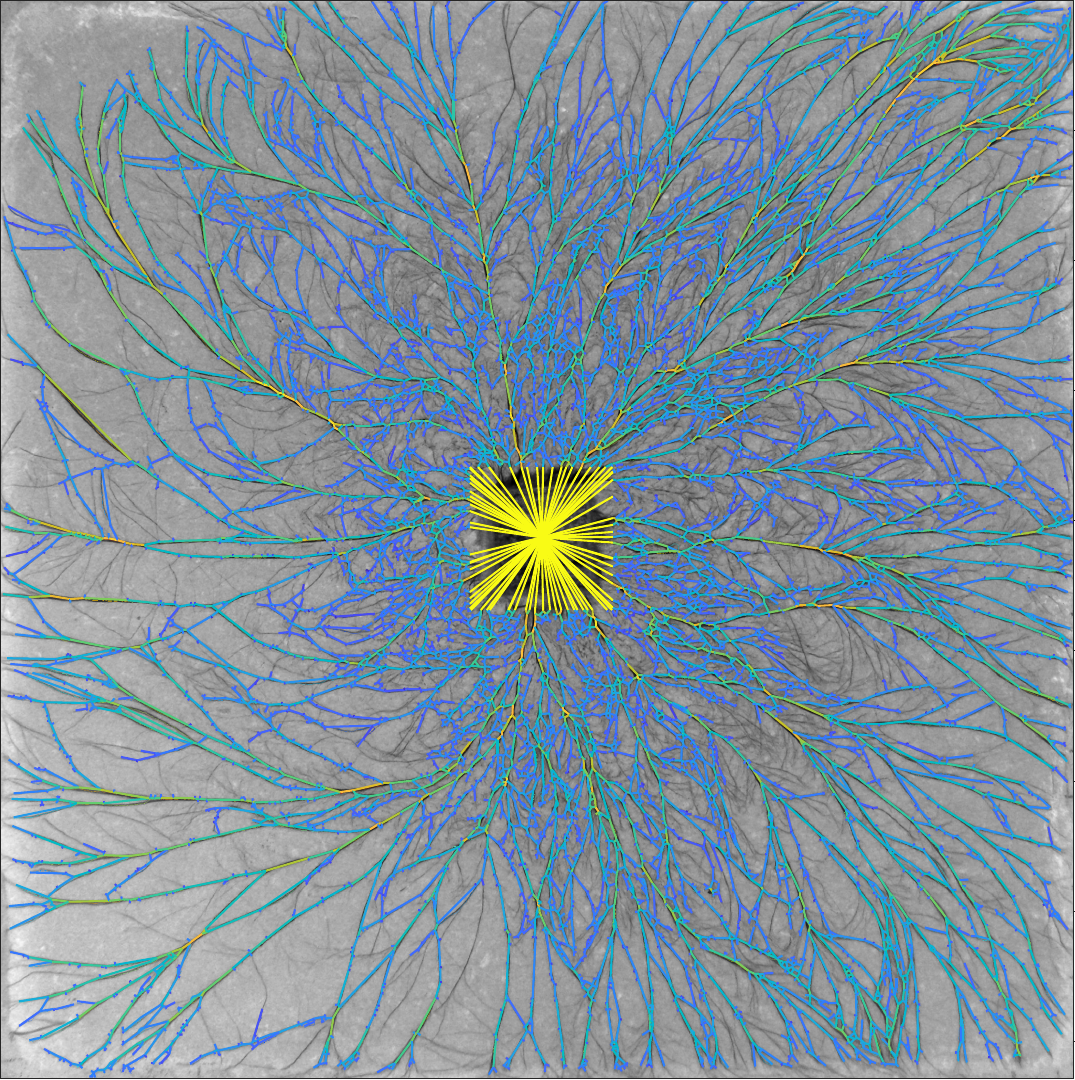}
    \includegraphics[height=0.4\linewidth]{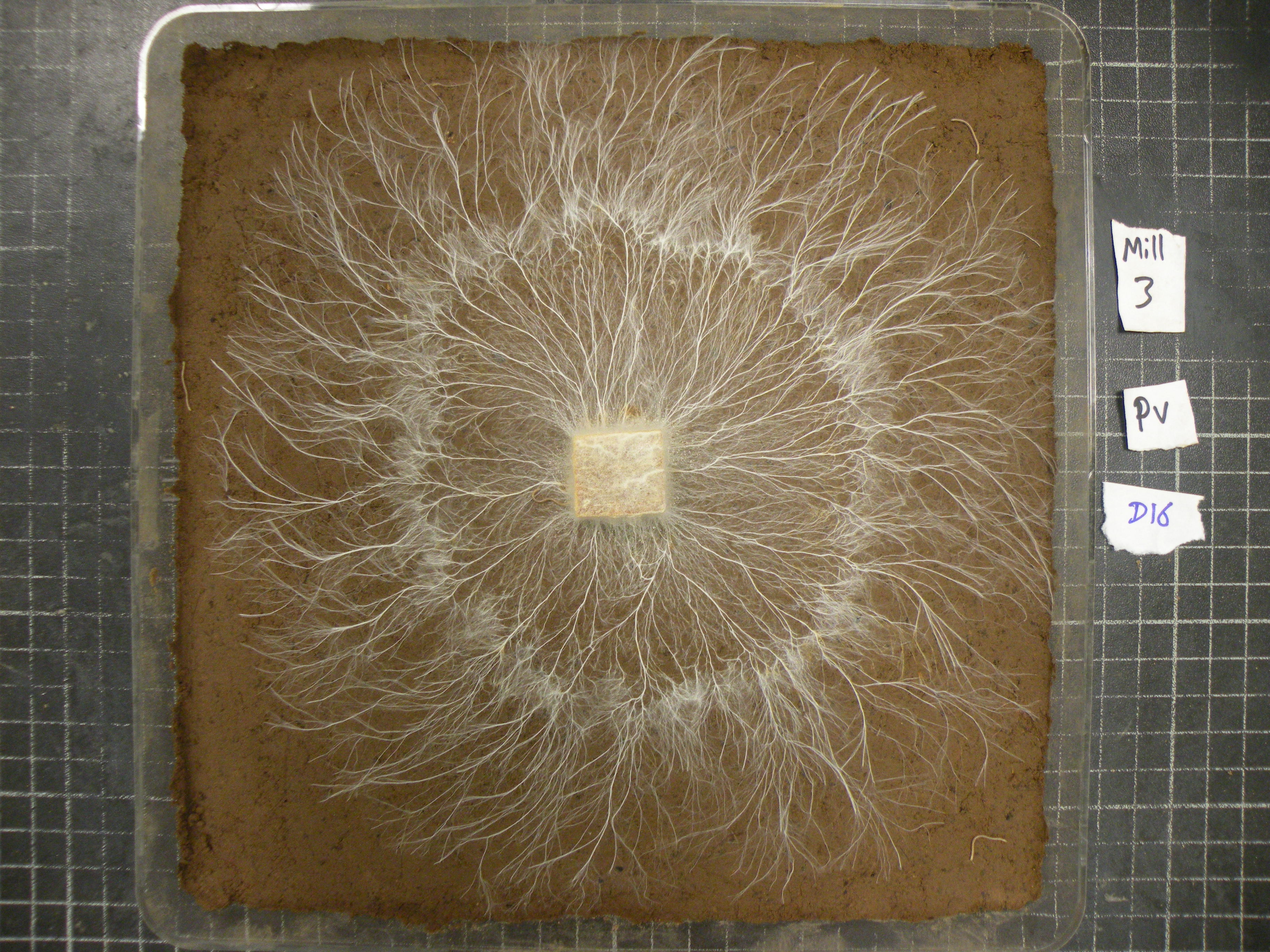}
    \includegraphics[height=0.4\linewidth]{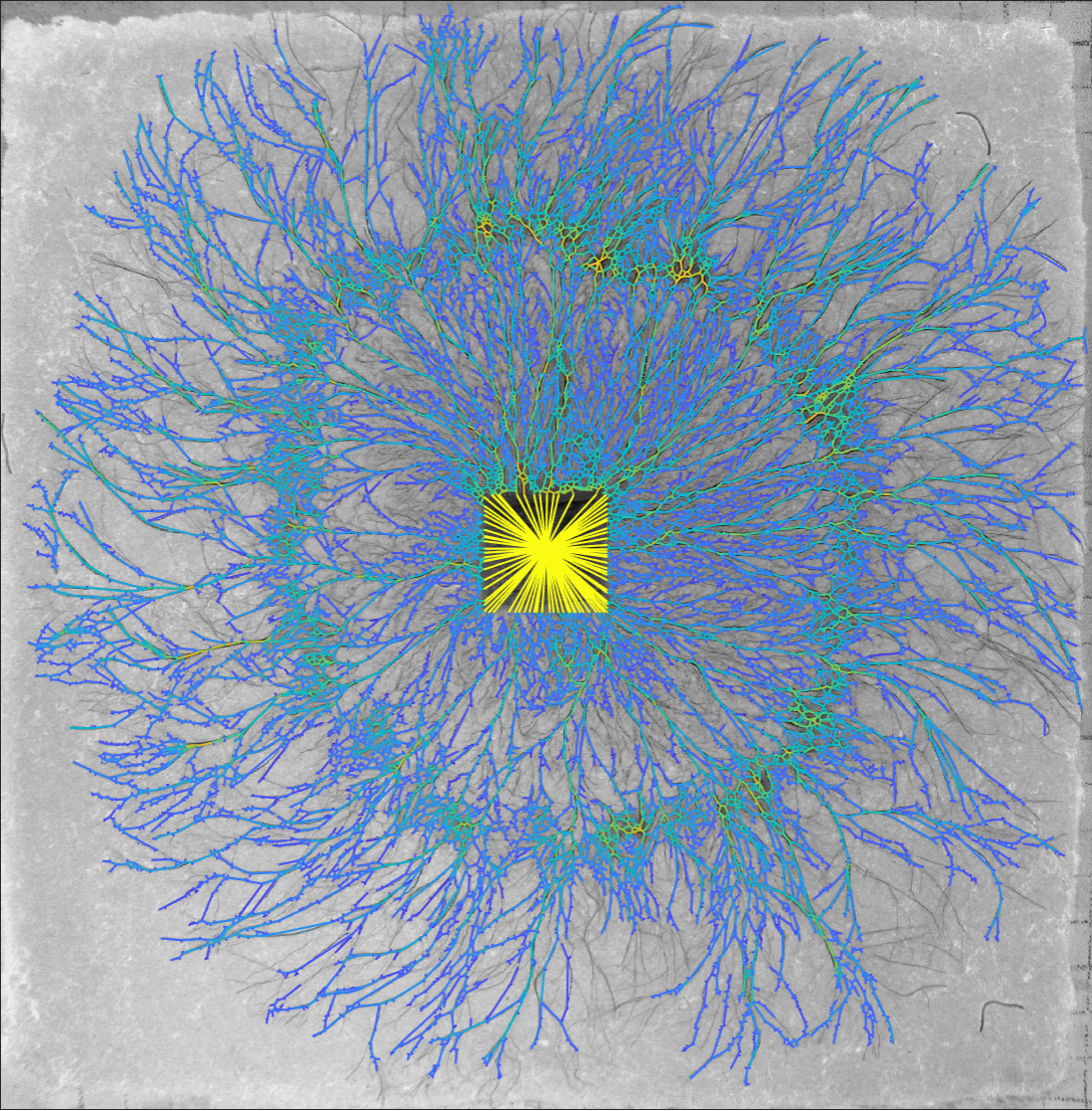}
    \caption{Examples of fungus mycelia and their corresponding graphs. Fungi are grown from the central woodblock. Top: Phanerochaete velutina grown without grazers. Bottom: Phanerochaete velutina grown with Millipedes grazers. The left images show pictures of the fungi in the experimental setting \citep{fricker2025fungi}. The right images show the graph structures that are extracted from the images (the wood block is treated as only one node) by \cite{fricker2025fungi}. Colors indicate weights computed based on length and thickness of the mycelium pieces.}
    \label{fig:fungus_images}
\end{figure}

We apply our topological spatial graph coarsening algorithm to each graph of the dataset. Note that here, as the graph is given with specific weight that are relevant for the biological modeling, the coarsening is perform by using the resistance weights instance of edge lengths. Observe that the resistance is still a spatially and geometrically inspired quantity, similar to the length. Thus it is relevant to use it in this biological context. 
For each graph $G = (V,E,X)$ in the dataset, we apply our topological spatial graph coarsening procedure but replacing the use of the edge lengths the specific resistance values. Therefore, the scoring function is optimized over a grid $\Theta_m = \{ q_\alpha(\mathcal{W}) , \alpha \in \{ 1/(m+1), \dots, m/(m+1) \} \}$ adapted to the values of the resistance weights $\mathcal{W} = \{ w_{u,v}, \ (u,v) \in E \}$. In the experiments, $m$ is set to 10.
Once we obtain the minimizer $\theta^*$ of $S$ over $\Theta_m$, we can save the corresponding quantile level $\alpha^*$ such that $q_{\alpha^*}(\mathcal{W}) = \theta^*$. This $\alpha^*$ corresponds to the fraction of edges that are collapsed when the coarsening is performed with threshold $\theta^*$. The histogram of these optimal quantile levels $\alpha^*$ over the dataset are given in Figure~\ref{fig:hist_quantile_levels}. We can observe that based on the topology of the graph, our coarsening algorithm chooses to reduce the graphs by different amounts. Nonetheless, it is able to significantly reduce the graph sizes, as indicated by the boxplots representing the graph size distribution on both the original graph dataset and the dataset of reduced graphs in Figure~\ref{fig:boxplot_sizes}. Examples of the obtained coarsened graphs are shown in Figures \ref{fig:Pi_ctrl2_d4_5_graph}, \ref{fig:Pi_coll_d4_5_graph} and \ref{fig:Pi_mill_d8_2_graph}, with the corresponding score functions in Figures \ref{fig:Pi_ctrl2_d4_5_scores}, \ref{fig:Pi_coll_d4_5_scores}, and \ref{fig:Pi_mill_d8_2_scores}.

\begin{figure}[H]
    \begin{minipage}{0.5\linewidth}
        \centering
        \includegraphics[width=\linewidth]{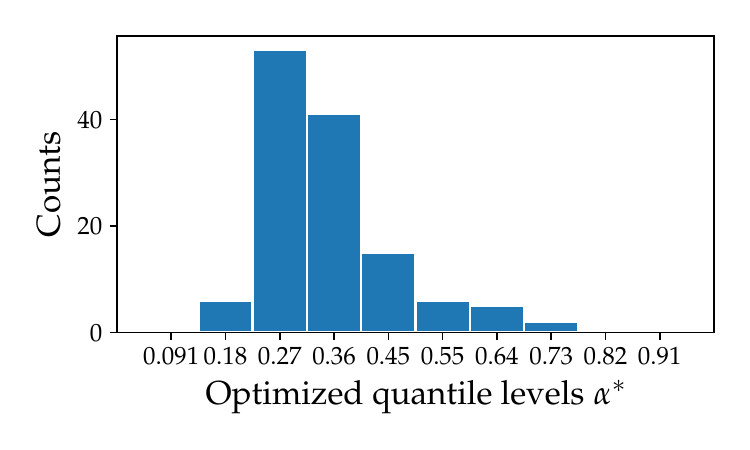}
        \caption{Histogram of the optimal quantiles levels $\alpha^*$ over the selected dataset. The bins of the histogram matches the grid $\Theta_m$.}
        \label{fig:hist_quantile_levels}
    \end{minipage}
    \begin{minipage}{0.45\linewidth}
        \centering
        \includegraphics[width=\linewidth]{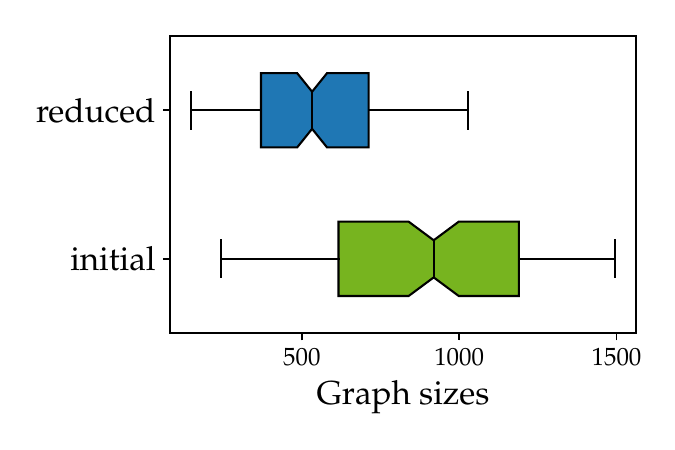}
        \caption{Boxplot illustrating the distribution of the graph sizes, both for the original selected dataset and the reduced one. }
        \label{fig:boxplot_sizes}
    \end{minipage}
\end{figure}

\begin{figure}[H]
\centering
\begin{minipage}{0.53\linewidth}
    \centering
    \includegraphics[width=\linewidth]{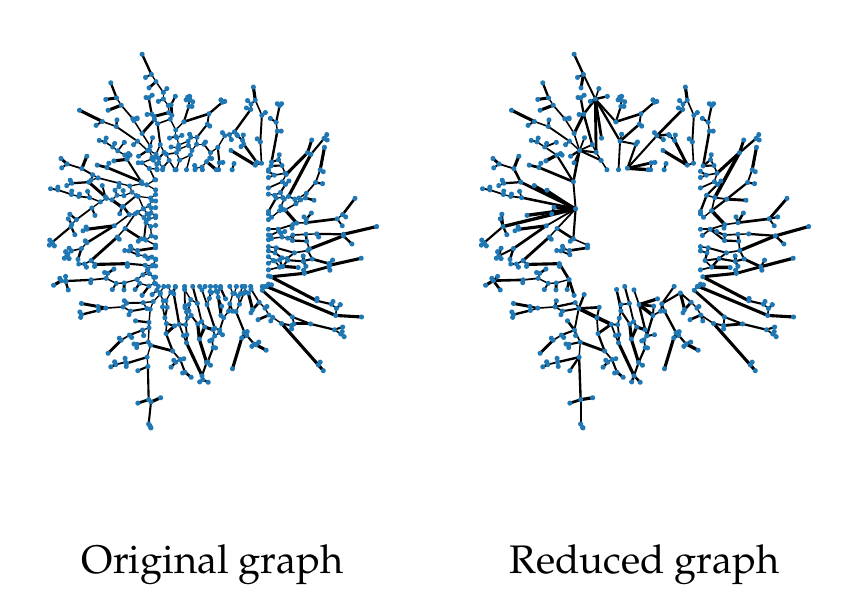}
    \caption{Illustration of the topological spatial reduction on a ``Phallus impudicus`` fungus without grazing.}
    \label{fig:Pi_ctrl2_d4_5_graph}
\end{minipage}
\hspace{0.01\linewidth}
\begin{minipage}{0.44\linewidth}
    \centering
    \includegraphics[width=\linewidth]{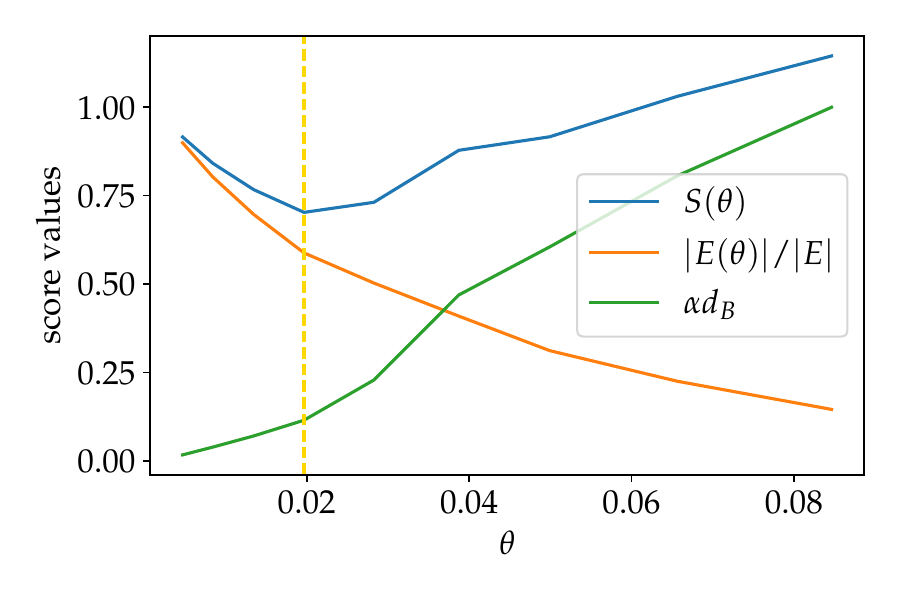}
    \caption{The score $S(\theta)$ (in blue), the graph complexity (in green) and the topological distance (in green) w.r.t. $\theta$, associated to the graph reduction in Figure~\ref{fig:Pi_ctrl2_d4_5_graph}.}
    \label{fig:Pi_ctrl2_d4_5_scores}
\end{minipage}
\end{figure}

\begin{figure}[H]
\centering
\begin{minipage}{0.53\linewidth}
    \centering
    \includegraphics[width=\linewidth]{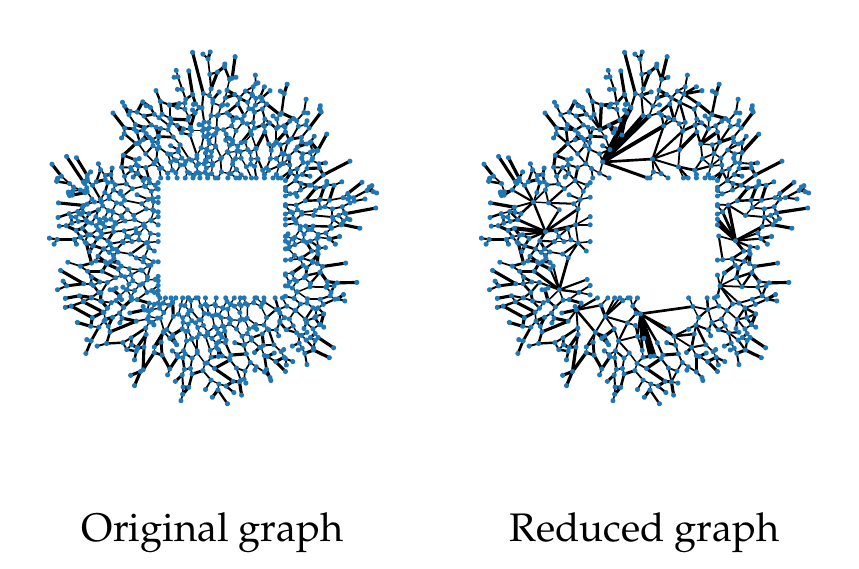}
    \caption{Illustration of the topological spatial reduction on a ``Phallus impudicus`` fungus with grazing from ``collembola (small grazers).}
    \label{fig:Pi_coll_d4_5_graph}
\end{minipage}
\hspace{0.01\linewidth}
\begin{minipage}{0.44\linewidth}
    \centering
    \includegraphics[width=\linewidth]{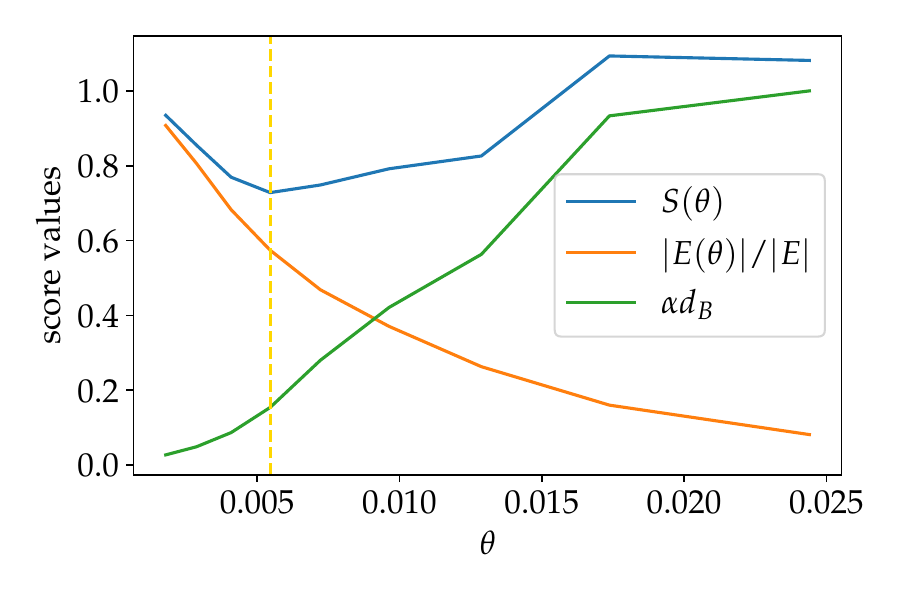}
    \caption{Same as above for the graph reduction in Figure~\ref{fig:Pi_coll_d4_5_graph}.}
    \label{fig:Pi_coll_d4_5_scores}
\end{minipage}
\end{figure}

\begin{figure}[H]
\centering
\begin{minipage}{0.53\linewidth}
    \centering
    \includegraphics[width=\linewidth]{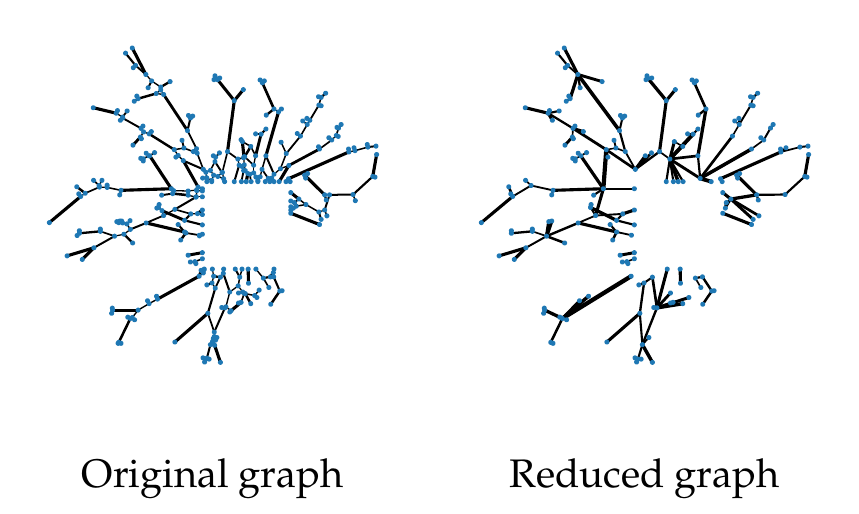}
    \caption{Illustration of the topological spatial reduction on a ``Phallus impudicus`` fungus with grazing from millipedes (large grazers).}
    \label{fig:Pi_mill_d8_2_graph}
\end{minipage}
\hspace{0.01\linewidth}
\begin{minipage}{0.44\linewidth}
    \centering
    \includegraphics[width=\linewidth]{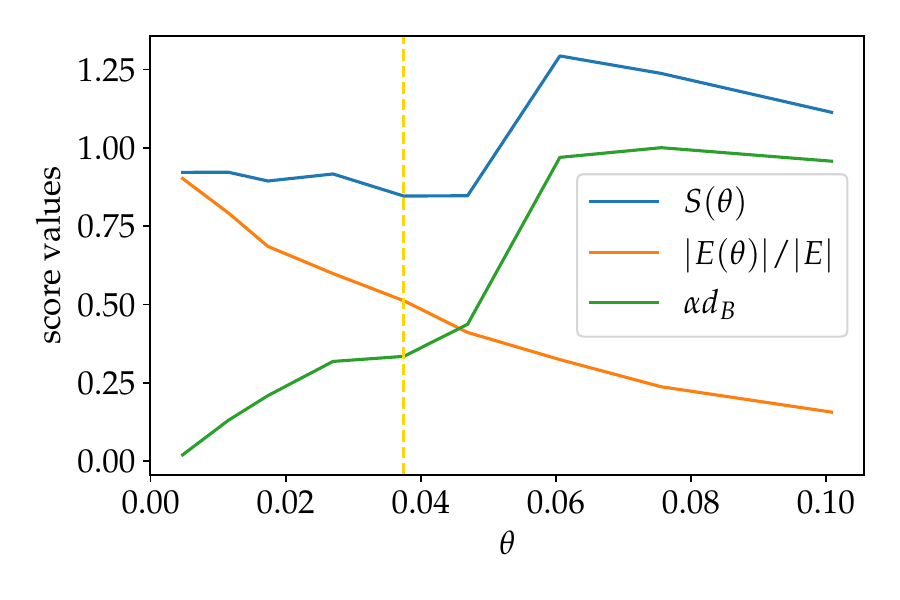}
    \caption{Same as above for the graph reduction in Figure~\ref{fig:Pi_mill_d8_2_graph}.}
    \label{fig:Pi_mill_d8_2_scores}
\end{minipage}
\end{figure}

To evaluate how well the information has been preserved in the reduced graph, we conduct the following experiment. We perform a classification task to predict the type of grazers that has attacked the fungus on both the original and the coarsened dataset. We then compare the accuracy of both classifiers, to estimate how much of the principal original information the coarsening method was able to retain. The experiment is based on the fact that the glazer type effect the grow of the fungal mycelium network \citep{fricker2025fungi}. For this experiment, we consider 3 types of grazing scenarios (3 classes): no grazer (control group), small grazers (< 10mm)  including collembola, mites and nematodes, and large grazers (> 10mm) including millipedes, woodlices, and enchytraeid worms. 
To perform the classification, for each spatial graph we extract simple features from their persistent diagrams: number of connected components, mean, maximal and total persistence (where the persistence of a point in the diagram is the difference between its death and birth value) for the 1D persistence diagram, mean birth and death values for the 1D persistence diagram, $L_2$ norm of the concatenated 5 first persistence landscapes \citep{bubenik2015statistical}, and number of degree 1 nodes in the graph. For each dataset (original and reduced), we extract these features. A random forest classifier is trained for each dataset, with 200 trees. Performances are evaluated by the accuracy on a 10-fold cross validation. The accuracies of the classifiers are reported in Figure~\ref{fig:acc_errbar}. The error bars represent the 95\% confidence interval obtained by bootstrapping $10^4$ times the predictions on each sample given by the cross-validation procedure.
Here, one should note that the dataset considered is not very large, therefore performances of the random forest is not expected to be very high. Nonetheless, we see that our learning procedure is able to capture relevant information as the obtained accuracies are significantly above 1/3, which would correspond to a totally uninformative random classification. Secondly, and more importantly, we observe only a slight accuracy drop between the original and reduced graphs, that does not seem to be statistically significant, as indicated by the overlapping confidence intervals. 
This clearly indicates that our coarsening algorithm has been able to significantly reduce the graph sizes by roughly a factor 2 (see Figure~\ref{fig:boxplot_sizes}) while still preserving the key information contained in these spatial graphs such that learning algorithms can still perform as good as when applied on the original data. We emphasize here that we did not tune the amount of reduction applied by our coarsening method to preserve classification performance as the whole point of our proposed approach is to have a self-tuning coarsening algorithm. The reduction level is set by preserving the topology of the graph, which is often carrying meaningful information, as demonstrated in the present experiment.

\begin{figure}
    \centering
    \includegraphics[width=0.55\linewidth]{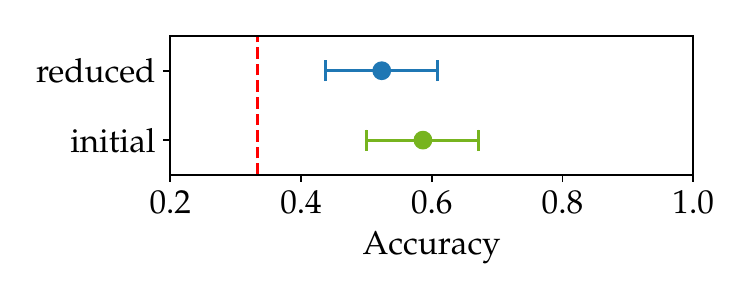}
    \caption{Accuracy of the random forest classifiers, on both the original and reduced datasets. The 95\% confidence intervals are computed by bootstrapping $10^4$ times the predictions of the 10-fold cross-validation. Vertical dashed line represents the theoretical accuracy of a random classifier (3 classes).} 
    \label{fig:acc_errbar}
\end{figure}

\section{Conclusion and further developments}
We propose a new method for coarsening graphs with spatial coordinated associated to the nodes. The methods aims at preserving the topological properties of the graph, captured by adapting standard persistence diagram. We prove the novel coarsening method is equivariant with respect to the action of the symmetric group. We tested the method on few simulated data and two real world datasets: a road network and a set of fungal mycelia. To grasp the trade off between retaining salient information and simplifying the graphs we perform a classification task on the fungi dataset. While the coarsening procedure significantly reduced the graph sizes, we observe only a small decrease in the accuracy, indicating that our coarsening approach has been able to preserve the key topological information used for the classification task.  

There are several further development available. Firstly, the optimization procedure on the $\theta$ can be refined by for example using a smart discretization procedure (e.g. domain adaptation task). 
It might be worth exploring more some parts of the edge lengths distribution, while others (\eg, corresponding to drastic graph reductions) should be investigated less thoroughly.  
Along with spatial graphs, shape graphs have also been defined as similar data object accounting for shape along the edges (for example the shape of a vein) \citep{bal2024statistical}. An adaptation to such data could be defined by accounting the edge shape into both the PD estimation and the coarsening procedure. 

A final yet relevant further development is to define an inverse procedure for which we identify the edges that should be collapsed in order to preserve certain topological features. Such procedure relies on the open question of how to inverse the diagram and identify graph elements that are associated to some points in the persistent diagram. 

\section*{Acknowledgments}

We would like to thank Mark Fricker for the valuable discussions on the fungus data and for providing us with the dataset. We also want to acknowledge the financial support of the CNRS-Imperial "Abraham de Moivre" International Research Laboratory on this project.

\section*{Statements and Declarations}
\noindent\textbf{Conflict of interest} On behalf of all authors, the corresponding author states that there is no conflict of interest.\\
\textbf{Funding} Partial financial support was received from the CNRS-Imperial "Abraham de Moivre" International Research Laboratory Travel fund.

\bibliographystyle{./bst/sn-mathphys-ay}
\bibliography{bib.bib}
\end{document}